\Crefname{ALC@unique}{Line}{Lines}
\pgfplotsset{compat=newest}
\pgfplotsset{plot coordinates/math parser=false}
\newlength\figureheight
\newlength\figurewidth 
\DeclareMathOperator{\diag}{diag}
\def\norm#1{\|#1\|} 
\definecolor{darkgreen}{rgb}{0.0,0.5,0.0}
\definecolor{amber}{rgb}{1.0, 0.75, 0.0}
\def\abs#1{\left|#1\right|} %
\def\norm#1{\left\|#1\right\|} %
\newcommand{\R}{\ensuremath{\mathbb{R}}}
\newcommand{\Aa}{\mathbf{A}}
\newcommand{\Tt}{\mathbf{T}}
\newcommand{\Vv}{\mathbf{V}}
\newcommand{\Ee}{\mathbf{E}}
\newcommand{\Uu}{\mathbf{U}}
\newcommand{\Ll}{\mathbf{L}}
\newcommand{\Qq}{\mathbf{Q}}
\newcommand{\Dd}{\mathbf{D}}
\newcommand{\Ww}{\mathbf{W}}
\newcommand{\rb}{\mathbf{r}}
\newcommand{\yb}{\mathbf{{{y}}}}
\newcommand{\qb}{\mathbf{q}}
\newcommand{\eps}{\varepsilon}
\newcommand{\xx}{{\bf x}}
\newcommand{\vb}{{\bf v}}
\newcommand{\bu}{{\bf u}}
\newcommand{\bv}{{\bf v}}
\def\abs#1{\left|#1\right|} %
\def\norm#1{\left\|#1\right\|} %
\DeclareMathOperator*{\argmin}{arg\,min}
\def\abs#1{\left|#1\right|} 
\def\norm#1{\left\|#1\right\|} 
\numberwithin{theorem}{section}
\newcommand{\TheTitle}{NFFT meets Krylov methods: Fast matrix-vector products for the graph Laplacian of fully connected networks}
\newcommand{\ShortTitle}{NFFT meets Krylov methods}
\newcommand{\TheAuthors}{Dominik Alfke, Daniel Potts, Martin Stoll, Toni Volkmer}
\headers{\ShortTitle}{\TheAuthors}
\title{{\TheTitle} 
}
\author{Dominik Alfke\thanks{Technische Universit\"at Chemnitz, Faculty of Mathematics, Chair of Scientific Computing, 09107 Chemnitz, Germany, ({\tt dominik.alfke@mathematik.tu-chemnitz.de})} \and Daniel Potts\thanks{Technische Universit\"at Chemnitz, Faculty of Mathematics, Chair of Applied Functional Analysis, 09107 Chemnitz, Germany, ({\tt daniel.potts@mathematik.tu-chemnitz.de})}\and Martin Stoll\thanks{Technische Universit\"at Chemnitz, Faculty of Mathematics, Chair of Scientific Computing, 09107 Chemnitz, Germany, ({\tt martin.stoll@mathematik.tu-chemnitz.de})} \and Toni Volkmer\thanks{Technische Universit\"at Chemnitz, Faculty of Mathematics, Chair of Applied Analysis, 09107 Chemnitz, Germany, ({\tt toni.volkmer@mathematik.tu-chemnitz.de})}
}
\begin{document}

\maketitle

\begin{abstract}
The graph Laplacian is a standard tool in data science, machine learning, and image processing. The corresponding matrix inherits the complex structure of the underlying network and is in certain applications densely populated. This makes computations, in particular matrix-vector products, with the graph Laplacian a hard task. A typical application is the computation of a number of its eigenvalues and eigenvectors. Standard methods become infeasible as the number of nodes in the graph is too large. We propose the use of the fast summation based on the nonequispaced fast Fourier transform (NFFT) to perform the dense matrix-vector product with the graph Laplacian fast without ever forming the whole matrix. The enormous flexibility of the NFFT algorithm allows us to embed the accelerated multiplication into Lanczos-based eigenvalues routines or iterative linear system solvers \textcolor{black}{and even consider other than the standard Gaussian kernels}.
We illustrate the feasibility of our approach on a number of test problems from image segmentation to semi-supervised learning based on graph-based PDEs.
In particular, we compare our approach with the Nystr\"om method. Moreover, we present and test an enhanced, hybrid version of the Nystr\"om method, which internally uses the NFFT.
\end{abstract}

\begin{keywords} Graph Laplacian, Lanczos Method, Eigenvalues, Nonequispaced Fast Fourier Transform, Machine Learning
\end{keywords}

\begin{AMS}
	68R10, 
	05C50, 
	65F15, 
	65T50, 
	68T05, 
	62H30  
\end{AMS}
\pagestyle{myheadings}
\thispagestyle{plain}
\markboth{}{}

\section{Introduction}

Graphs are a fundamental tool in the modeling of imaging and data science applications \cite{VLu07,shuman2013emerging,belkin2002laplacian,belkin2003laplacian,henaff2015deep}.  To apply graph-based techniques, individual data points in a data set or pixels of an image represent the vertex set or nodes $V$ of the graph, and the edges indicate the relationship between the vertices. In a number of real-world examples, the graph is sparse in the sense that each vertex is only connected to a small number of other vertices, i.e., the graph affinity matrix is sparsely populated. In other applications, such as the mentioned data points or image pixels, the natural choice for the graph would be a fully connected graph, which is then reflected in dense matrices that represent the graph information. Naturally, if there is no underlying graph the most natural choice is the fully connected graph. As the eigenvectors of the corresponding graph Laplacian are crucial in reducing the complexity of the underlying problem or for the extraction of quantities of interest \cite{BerEG07,BerF12,shuman2013emerging}, it is important to compute them accurately and fast. If this matrix is sparse, numerical analysis has provided efficient tools based on the Lanczos process with sparse matrix-vector products that can compute the eigeninformation efficiently. For complex interactions leading to dense matrices, these methods suffer from the high cost of the matrix-vector product.

Our goal is hence to obtain the eigeninformation despite the fact that the graph is fully connected and without any a priori reduction of the graph information. For this we rely on a Lanczos procedure %
based on \cite{BaRe05}. This method needs
to perform the matrix-vector product in a fast way and thus, evaluating all information, without ever fully assembling the graph matrices. In a similar fashion the authors in \cite{BerF12} utilize the well-known Nystr\"om method to only work with partial information from the graph and only approximately represent the remaining parts. Such methods are well-known within the  fast solution of integral equations and have found applicability within the data science community \cite{drineas2005nystrom,liberty2007randomized}. The technique we present here is known as a fast summation method \cite{post02, postni04} and is based on the nonequispaced fast Fourier transform (NFFT), see \cite{KeKuPo09} and the references therein. We apply this method in the setting where the weights of the edges between the vertices are modelled by a Gaussian kernel function of medium to large scaling parameter, such that the Gaussian is not well-localized and most vertices interact with each other.
For the case of a smaller scaling parameter and consequently a more localized Gaussian, we refer to \cite{morariu08figtree}, \textcolor{black}{which is partially based on a technique presented \cite{yang2003improved} for Gaussian kernels.} Moreover, we remark that the NFFT-based fast summation method considered in this paper does not only support Gaussians but can handle various other rotational invariant functions.

The remaining parts of this paper are structured as follows. In Section~\ref{sec::graph_laplacian}, we first introduce the graph Laplacian and discuss the matrix structure.
In Section~\ref{sec::nfft}, we introduce the NFFT-based fast summation, which allows for computing fast matrix-vector products with the graph Laplacian.
In Section~\ref{sec::lanczos}, we then recall Krylov subspace methods and in particular the Lanczos method, which sits at the engine room of the numerical computations to obtain a small number of eigenvectors. We then show that the graph Laplacian provides the ideal environment to be used together with the NFFT-based fast summation,
and we obtain the NFFT-based Lanczos method.
In Section~\ref{sec::nystrom} we briefly discuss the Nystr\"om method as a direct competitor to our approach. We improve and accelerate this method, creating a new hybrid Nystr\"om-Gaussian-NFFT version, which incorporates the NFFT-based fast summation.
In Section~\ref{sec::res}, we present comparisons between the NFFT-based Lanczos method, the Nystr\"om method and the hybrid Nystr\"om-Gaussian-NFFT method with the direct application of the Lanczos method for a dense, large-scale problem. Additionally, we illustrate on a number of exemplary applications, such as spectral clustering and semi-supervised learning, that our approach provides a convenient infrastructure to be used within many different schemes.

\section{The graph Laplacian and fully connected graphs}
\label{sec::graph_laplacian}
We consider an undirected graph $G=(V,E)$ with the vertex set $V=\left\lbrace v_j\right\rbrace_{j=1}^{n}$ and the edge set $E$,  cf.~\cite{Chu97} for more information. An edge $e\in E$ is a pair of nodes $(v_j,v_i)$ with $v_j\neq v_i$ and $v_j,v_i\in V$. For weighted undirected graphs, such as the ones considered in this paper, we also have a weight function $w:V\times V\rightarrow\mathbb{R}$ with $w(v_j,v_i)=w(v_i,v_j)\textrm{ for all } j,i$. We assume further that the function is positive for existing edges and zero otherwise. The degree of the vertex $v_j\in V$ is defined as 
\begin{equation*}
d(v_j)=\sum_{v_i\in V}w(v_j,v_i).
\end{equation*}
Let $\Ww,\; \Dd \in \R^{n\times n}$ be the weight matrix and the diagonal degree matrix with entries $W_{ji} = w(v_j,v_i)$ and $D_{jj} = d(v_j)$.
Since we do not permit graphs with loops, $\Ww$ is zero on the diagonal.
Now the crucial tool for further investigations is the graph Laplacian $\Ll$ defined via 
\begin{equation*}
\Ll(v_j,v_i)=
\begin{cases}
d(v_j)&\textnormal{ if }v_j=v_i  \\ 
-w(v_j,v_i) & \textnormal{ otherwise, }
\end{cases}
\end{equation*}
i.e. $\Ll = \Dd - \Ww$.
The matrix $\Ll$ is typically known as the \textit{combinatorial graph Laplacian} and we refer to  \cite{VLu07} for an excellent discussion of its properties. Typically its normalized form is employed for segmentation purposes and we obtain the normalized Laplacian as
\begin{equation}\label{eq:nomlap}
\Ll_s=\Dd^{-1/2}\Ll\Dd^{-1/2}=\mathbf{I}-\Dd^{-1/2}\Ww\Dd^{-1/2},
\end{equation}
obviously a symmetric matrix. Another normalized Laplacian of nonsymmetric form is given by
\begin{equation*}
\Ll_w=\Dd^{-1}\Ll=\mathbf{I}-\Dd^{-1}\Ww.
\end{equation*}
For the purpose of this paper we focus on the symmetric normalized Laplacian $\Ll_s$ but everything we derive here can equally be applied to the nonsymmetric version, where we would then have to resort to nonsymmetric Krylov methods such as {\sc GMRES} \cite{gmres}. %
It is well known in the area of data science, data mining, image processing and so on that the smallest eigenvalues and its associated eigenvectors possess crucial information about the structure of the data and/or image \cite{VLu07,ZelP04, BerF12}. For this we state an amazing property of the graph Laplacian $\Ll$ for a general vector $\mathbf{u}\in\R^{n}$ with $n$ the dimension of $\Ll$
\begin{equation*}
 \mathbf{u}^{T}\Ll\mathbf{u}=\frac{1}{2}\sum_{j,i=1}^{n}W_{ji}(\mathbf{u}_j-\mathbf{u}_i)^{2},
\end{equation*}
which, as was illustrated in \cite{VLu07}, is equivalent to the objective function of the graph \textit{RatioCut} problem. Intuitively, assuming the vector $\mathbf{u}$ to be equal to a constant on one part of the graph $A$ and a different constant on the remaining vertices $\bar{A}$. In this case $\mathbf{u}^{T}\Ll\mathbf{u}$ only contains terms from the edges with vertices in both   $A$ and $\bar{A}$. Thus a minimization of $\mathbf{u}^{T}\Ll\mathbf{u}$ results in a minimal cut with respect to the edge weights across $A$ and $\bar{A}$. Obviously, $0$ is an eigenvalue of $\Ll$ and its normalized variants as $\Ll\mathbf{1}=\Dd\mathbf{1}-\Ww\mathbf{1}=\mathbf{0}$ by the definitions of $\Dd$ and $\Ww$ with $\mathbf{1}$ being the vector of all ones. Additionally, spectral clustering techniques heavily rely on the computation of the smallest $k$ eigenvectors \cite{VLu07} and recently semi-supervised learning based on PDEs on graphs introduced by Bertozzi and Flenner \cite{BerF12} utilizes a small number of such eigenvectors for a complexity reduction. It is therefore imperative to obtain efficient techniques to compute the eigenvalues and eigenvectors fast and accurately. 
Since we are interested in the $k$ smallest eigenvalues of the matrix $\Ll_s=\mathbf{I}-\Dd^{-1/2}\Ww\Dd^{-1/2}$ it is clear that we can compute the $k$ largest postive eigenvalues of the matrix $\Aa:=\Dd^{-1/2}\Ww\Dd^{-1/2}$.
In case that the graph $G=(V,E)$ is sparse in the sense that every vertex is only connected to a small number of other vertices and thus the matrix $\Ww$ is sparse, we can utilize the whole arsenal of numerical algorithms for the computations of a small number of eigenvalues, namely the Lanczos process \cite{book::golubvanloan}, the Krylov-Schur method \cite{stewartkrylov}, or the Jacobi-Davidson algorithm \cite{sleijpen2000jacobi}. In particular, the ARPACK library \cite{lehoucq1998arpack} in \textsc{Matlab} via the \texttt{eigs} function is a recommended choice. So frankly speaking, in the case of a sparse and symmetric matrix $\Ww$ the eigenvalue problem is fast and the algorithms are very mature. Hence, we focus on the case of fully connected graphs meaning that the matrix $\Ww$ is considered dense. 

The standard scenario for this case is that each node $v_j \in V$ corresponds to a data vector $\vb_j \in \R^d$ and the weight matrix is constructed as
\begin{equation}\label{eq::W_gaussian}
W_{ji} = 
w(v_j,v_i)=
\begin{cases}
\exp(-\norm{\vb_j-\vb_i}^2/\sigma^2)&\textnormal{ if } 
j \neq i, \\
0 & \textnormal{ otherwise}
\end{cases}
\end{equation}
with a scaling parameter $\sigma$.
For example, approaches with this kind of graph Laplacian have become increasingly popular in image processing \cite{romano2017little}, where the data vectors $\vb_j$ encode color information of image pixels via their color channels. The data point dimension may then be $d=1$ for grayscale images and $d=3$ for RGB images. Other applications may involve simple Cartesian coordinates for $\vb_j$. While Equation \eqref{eq::W_gaussian} is derived from a Gaussian kernel function $K(\mathbf{y}):=\exp(-\norm{\mathbf{y}}^2/\sigma^2)$, other applications might call for different kernel functions like
the ``Laplacian RBF kernel'' $K(\mathbf{y}) := \exp (-\norm{\mathbf{y}}/\sigma)$,
the multiquadric kernel $K(\mathbf{y}):=(\norm{\mathbf{y}}^2+c^2)^{1/2}$,
or inverse multiquadric kernel $K(\mathbf{y}):=(\norm{\mathbf{y}}^2+c^2)^{-1/2}$ for a parameter $c>0$, e.g.\ cf.\ Section~\ref{sec::kernel_ridge_regression}.
This means, the weight matrix may be of the form
\begin{equation}\label{eq::W_general}
W_{ji} = 
\begin{cases}
K(\vb_j-\vb_i)&\textnormal{ if } 
j \neq i, \\
0 & \textnormal{ otherwise.}
\end{cases}
\end{equation}

Often certain techniques are used to sparsify the Laplacian or otherwise reduce its complexity in order to apply the methods named above. In particular, sparsification has been proposed for the construction of preconditioners \cite{spielman2004nearly} for iterative solvers, which still require the efficient implementation of the matrix vector products. In image processing, this can be achieved by considering only patches or other reduced representations of the image \cite{ZelP04}. However, this might drop crucial nonlocal information encoded in the full graph Laplacian \cite{romano2017little,gilboa2008nonlocal}, which is why we want to avoid it here and focus on fully connected graphs with dense Laplacians.

\section{NFFT-based fast summation}
\label{sec::nfft}
For eigenvalue computation as well as various other applications with the graph Laplacian, one needs to perform matrix-vector multiplications with the matrix~$\Ww$ or the matrix $\Aa:=\Dd^{-1/2}\Ww\Dd^{-1/2}$. In general, this requires $\mathcal{O}\big(n^2\big)$ arithmetic operations.
When the matrix~$\Ww$ has entries~\eqref{eq::W_gaussian}, this arithmetic complexity can be reduced to $\mathcal{O}(n)$ using the NFFT-based fast summation \cite{post02,postni04}. In general, this method may be applied when the entries of the matrix $\Ww$ can be written in the form $W_{ji}=K(\vb_j-\vb_i)$, where $K\colon\R^d\rightarrow\mathbb{C}$ is a rotational invariant and smooth kernel function.
For applying the NFFT-based fast summation for \eqref{eq::W_general}, it would be more convenient to consider the matrix $\Ww$ to have entries equal to $K(\mathbf{0})$ on the diagonal and we refer to this matrix as $\tilde{\Ww}$. Note that it can be written as
$\tilde{\Ww}=\Ww + K(\mathbf{0})\, \mathbf{I}$
and thus 
$\Ww=\tilde{\Ww}-K(\mathbf{0})\, \mathbf{I}$.
In order to efficiently compute the row sums of $\Ww$, which appear on the diagonal of $\Dd$, we use
\begin{equation*}
\Ww\mathbf{1}=\tilde{\Ww}\mathbf{1}-K(\mathbf{0})\, \mathbf{I}\mathbf{1}=\tilde{\Ww}\mathbf{1}-K(\mathbf{0})\, \mathbf{1}.
\end{equation*}
We now illustrate how to efficiently compute the matrix-vector product with the matrix $\tilde{\Ww}$ using the NFFT-based fast summation. 
For instance, for the Gaussian kernel function, we have
\begin{equation}
\label{eq::nfftWtilde_x}
\left(\tilde{\Ww}\mathbf{x}\right)_j=\sum_{i=1}^{n}x_i \, \exp \left(-\frac{\norm{\vb_j-\vb_i}^2}{\sigma^2}\right)
\quad\forall j=1,\ldots,n
\end{equation}
with $\mathbf{x}=[x_1,x_2,\ldots,x_n]^T$
and we rewrite~\eqref{eq::nfftWtilde_x} by
\begin{equation}
\label{eq::nfftWtilde_x:kernel}
\left(\tilde{\Ww}\mathbf{x}\right)_j = f(\vb_j) := \sum_{i=1}^{n}x_i \, K(\vb_j-\vb_i)
\end{equation}
with the kernel function
$
K(\mathbf{y}) := \exp (-\norm{\mathbf{y}}^2/\sigma^2).
$
The key idea of the efficient computation of~\eqref{eq::nfftWtilde_x:kernel} is approximating $K$ by a trigonometric polynomial $K_\mathrm{RF}$ in order to separate the computations involving the vertices $\vb_j$ and $\vb_i$.
Assuming we have such a $d$-variate trigonometric polynomial
\begin{equation}\label{eq::K_RF}
K(\mathbf{y})  \approx
K_\mathrm{RF}(\mathbf{y}) := \sum_{\mathbf l\in I_N} \hat{b}_{\mathbf l} \,\mathrm{e}^{2\pi\mathrm{i}\mathbf{l} \mathbf{y}}, \quad
I_N := \{-N/2,-N/2+1,\ldots,N/2-1\}^d,
\end{equation}
with bandwidth $N\in 2\mathbb{N}$ and Fourier coefficients~$\hat{b}_{\mathbf l}$,
we replace $K$ by $K_\mathrm{RF}$ in~\eqref{eq::nfftWtilde_x:kernel} and we obtain
\begin{align*}
\left(\tilde{\Ww}\mathbf{x}\right)_j
 = f(\vb_j)
\approx f_\mathrm{RF}(\mathbf{v}_j)
:=& \sum_{i=1}^{n}x_i \, K_\mathrm{RF}(\mathbf{v}_j-\mathbf{v}_i)
 = \sum_{i=1}^{n} x_i \sum_{\mathbf{l}\in I_N} \hat{b}_{\mathbf l} \,\mathrm{e}^{2\pi\mathrm{i}\mathbf{l} (\mathbf{v}_j-\mathbf{v}_i)} \\
 =& \sum_{\mathbf{l}\in I_N} \hat{b}_{\mathbf l} \left(\sum_{i=1}^{n} x_i \,\mathrm{e}^{-2\pi\mathrm{i}\mathbf{l} \mathbf{v}_i}\right) \,\mathrm{e}^{2\pi\mathrm{i}\mathbf{l} \mathbf{v}_j}, \quad\forall j=1,\ldots,n.
\end{align*}
Using the NFFT~\cite{KeKuPo09}, one computes the inner and outer sums for all $j=1,\ldots,n$ totally in $\mathcal{O}(m^d\,n + N^d\log N)$ arithmetic operations, where $m\in\mathbb{N}$ is an internal window cut-off parameter which influences the accuracy of the NFFT.
Please note that since $K_\mathrm{RF}$ and $f_\mathrm{RF}$ are 1-periodic functions but neither $K$ nor $f$ are, ones needs to shift and scale the nodes~$\vb_j$ such that they are contained in a subset of the cube $[-1/4,1/4]^d$ ensuring $\vb_j-\vb_i\in [-1/2,1/2]^d$.
Depending on the Fourier coefficients~$\hat{b}_{\mathbf l}$, ${\mathbf l}\in I_N$, of the trigonometric polynomial $K_\mathrm{RF}$, where $\hat{b}_{\mathbf l}$ still have to be determined, we may need to scale the nodes~$\vb_j$ to a slightly smaller cube.

We emphasize that we are not restricted to the Gaussian weight function $w(v_j,v_i)=\exp (-\norm{\vb_j-\vb_i}^2/\sigma^2 )$ or a rotational invariant weight function. In fact, any kernel function~$K$ that can be well approximated by a trigonometric polynomial~$K_\mathrm{RF}$ may be used.

Next, we describe an approach to obtain suitable Fourier coefficients~$\hat{b}_{\mathbf l}$ of~$K_\mathrm{RF}$ based on sampling values of~$K$. Especially, we want to obtain a good approximation of~$K$ using a small number of Fourier coefficients~$\hat{b}_{\mathbf l}$.
Therefore, we regularize $K$ to obtain a 1-periodic smooth kernel function~$K_\mathrm{R}$, which is $p-1$ times continuously differentiable (in the periodic setting), such that its Fourier coefficients decay in a fast way. Then, we approximate the Fourier coefficients of~$K_\mathrm{R}$ using the trapezoidal rule and this yields the Fourier coefficients~$\hat{b}_{\mathbf l}$ of $K_\mathrm{RF}$.

For a rotational invariant kernel function~$K(\mathbf{y})$, which is sufficiently smooth except at the ``boundaries'' of the cube $[-1/2,1/2]^d$, e.g.\ $K(\mathbf{y}) = \exp (-\norm{\mathbf{y}}^2/\sigma^2)$,
we only need to regularize near $\norm{\mathbf{y}}=1/2$.
We use the ansatz
\begin{equation*}
K_\mathrm{R}(\mathbf{y}) :=
\begin{cases}
K(\mathbf{y})&\textnormal{ if } \norm{\mathbf{y}}\leq \frac 12-\varepsilon_\mathrm{B}  \\ 
T_\mathrm{B}(\norm{\mathbf{y}}) & \textnormal{ if } \frac 12-\varepsilon_\mathrm{B} < \norm{\mathbf{y}} \leq \frac 12, \\
T_\mathrm{B}\big(\frac 12\big) & \textnormal{ otherwise},
\end{cases}
\qquad
\qquad
\begin{tikzpicture}[baseline={( %
                               current bounding box.center)}]
\begin{axis}[
width=8cm,
height=8cm,
axis x line=none,
axis y line=none,
xmin=-3.5,xmax=2,
ymin=-0.5,ymax=5,
set layers,
mark layer=axis tick labels
]

\addplot[color=black,line width=0.8pt] coordinates {(-1.6,0) (1.55,0)};
\addplot[color=black,line width=0.8pt] coordinates {(-1.6,0) (-1.6,3)};

\addplot[line width=0.8pt,mark=-] coordinates {(-1.6,0.75)} node[left] {\small$-\frac 12 + \varepsilon_\mathrm{B}$};
\addplot[line width=0.8pt,mark=-] coordinates {(-1.6,2.25)} node[left] {\small$\frac 12 - \varepsilon_\mathrm{B}$};
\addplot[line width=0.8pt,mark=-] coordinates {(-1.6,0.25)} node[left] {$-\frac 12$};
\addplot[line width=0.8pt,mark=-] coordinates {(-1.6,2.75)} node[left] {$\frac 12$};

\addplot[color=black,mark=square*,/tikz/mark size=41.5,line width=1pt,mark options={fill=black!25}] coordinates {(0,1.5)};
\addplot[color=black,mark=*,/tikz/mark size=41.4,line width=1pt,mark options={fill=black!15}] coordinates {(0,1.5)};
\addplot[color=black,mark=*,/tikz/mark size=25,line width=1pt,mark options={fill=white}] coordinates {(0,1.5)};

\addplot[color=black,line width=0.8pt,dashed] coordinates {(-1.6,0.25) (1.55,0.25)}; %
\addplot[color=black,line width=0.8pt,dashed] coordinates {(-1.6,0.75) (1.55,0.75)}; %
\addplot[color=black,line width=0.8pt,dashed] coordinates {(-1.6,2.25) (1.55,2.25)}; %
\addplot[color=black,line width=0.8pt,dashed] coordinates {(-1.6,2.75) (1.55,2.75)}; %

\addplot[color=black,line width=0.8pt,dashed] coordinates {(-1.25,0) (-1.25,3)}; %
\addplot[color=black,line width=0.8pt,dashed] coordinates {(-0.75,0) (-0.75,3)}; %
\addplot[color=black,line width=0.8pt,dashed] coordinates {(0.75,0) (0.75,3)}; %
\addplot[color=black,line width=0.8pt,dashed] coordinates {(1.25,0) (1.25,3)}; %

\addplot[color=black,mark=text,/pgf/text mark={\footnotesize $K(\mathbf y)$}] coordinates{(0,1.5)};
\addplot[color=black,mark=text,/pgf/text mark={\footnotesize $T_{\rm B}(\|\mathbf y\|)$}] coordinates{(0,2.45)};

\end{axis}
\end{tikzpicture}
\end{equation*}
where $T_\mathrm{B}$ is a suitably chosen univariate polynomial, e.g. computed by a two-point Taylor interpolation.
 The parameter $0 < \varepsilon_\mathrm{B} \ll 1/2$ determines the size of the regularization region, cf.\ \cite[Sec.~2]{postni04}.
For the treatment of a rotational invariant kernel function which has a singularity at the origin, we also refer to \cite[Sec.~2]{postni04}.
Now we approximate $K_\mathrm{R}$ by the $d$-variate trigonometric polynomial $K_\mathrm{RF}$
from~\eqref{eq::K_RF},
where we compute the
Fourier coefficients
\begin{equation}\label{eq::b_l::K_RF}
\hat{b}_{\mathbf{l}} := \frac{1}{N}\sum_{\mathbf{j}\in I_N} K_\mathrm{R}\left(\frac{\mathbf j}{N}\right) \,\mathrm{e}^{-2\pi\mathrm{i}\mathbf{j} \mathbf{l} / N}
\quad\forall\mathbf{l}\in I_N.
\end{equation}
Assuming one evaluation of $K_\mathrm{R}$ takes $\mathcal{O}(1)$ arithmetic operations,
the computations in~\eqref{eq::b_l::K_RF} require $\mathcal{O}(N^d\log N)$ arithmetic operations in total using the fast Fourier transform.

If all vertices $v_j$ and their corresponding data vectors $\mathbf{v}_j\in\R^d$, $j=1,\ldots,n$, fulfill the property $\norm{\mathbf{v}_j} \leq 1/4-\varepsilon_\mathrm{B}/2$,
we have $\norm{\mathbf{v}_j-\mathbf{v}_i} \leq 1/2-\varepsilon_\mathrm{B}$ and we obtain an approximation of~\eqref{eq::nfftWtilde_x:kernel} by
\begin{equation*}
\left(\tilde{\Ww}\mathbf{x}\right)_j = 
f(\mathbf{v}_j) = f_\mathrm{R}(\mathbf{v}_j) := \sum_{i=1}^{n}x_i K_\mathrm{R}(\mathbf{v}_j-\mathbf{v}_i)
\approx
f_\mathrm{RF}(\mathbf{v}_j) := \sum_{i=1}^{n}x_i K_\mathrm{RF}(\mathbf{v}_j-\mathbf{v}_i).
\end{equation*}
Otherwise, we compute a correction factor $\rho:=(1/4-\varepsilon_\mathrm{B}/2)/\max_{j=1,\ldots,n} \norm{\mathbf{v}_j}$, using transformed vertices $\tilde{\mathbf{v}}_j:=\mathbf{v}_j \rho$, and adjust parameters of the kernel function appropriately. For instance, in case of the Gaussian kernel function, we replace the scaling parameter $\sigma$ by $\tilde{\sigma}:=\sigma \rho$ for the regularized kernel function $K_\mathrm{R}$.

The error of the approximation $f(\mathbf{v}_j):=(\tilde{\Ww}\mathbf{x})_j \approx f_\mathrm{RF}(\mathbf{v}_j)$
depends on the kernel function as well as on the choice of the regularization smoothness $p$, the size of the regularization region~$\varepsilon_\mathrm{B}$, the bandwidth $N$, and the window cut-off parameter $m$. For a fixed accuracy, we fix these parameters $p$, $\varepsilon_\mathrm{B}$, $N$, and~$m$. Hence, for small to medium dimensions~$d$, we obtain a fast approximate algorithm for the matrix-vector multiplication $\tilde{\Ww}\mathbf{x}$ of complexity $\mathcal{O}(n)$, cf.\ Algorithm~\ref{alg::nfft_fastsum}.
This algorithm is implemented as \texttt{applications/fastsum} and \texttt{matlab/fastsum} in C and \textsc{Matlab} within the NFFT3 software library\footnote{\url{https://www.tu-chemnitz.de/~potts/nfft/}}, see also~\cite{KeKuPo09},
and we use the default Kaiser-Bessel window function.
In Figure~\ref{tab:fastsum_param}, we list the relevant control parameters of Algorithm~\ref{alg::nfft_fastsum} and regularization approach~\eqref{eq::b_l::K_RF}.

\begin{algorithm}[htb!]
\caption{Fast approximate matrix-vector multiplication $\tilde{\Ww}\mathbf{x}$ using NFFT-based fast summation,
$(\tilde{\Ww}\mathbf{x})_j=\sum_{i=1}^{n}x_i \,K(\vb_j-\vb_i)$, \, e.g.
$(\tilde{\Ww}\mathbf{x})_j=\sum_{i=1}^{n}x_i \,\exp (-\norm{\mathbf{v}_j-\mathbf{v}_i}^2/\sigma^2),
\quad\forall j=1,\ldots,n$.}
\label{alg::nfft_fastsum}
\vspace{0.5em}
  \begin{tabular}{p{1.1cm}p{3.5cm}p{9.7cm}}
    Input:      %
                & $\left(\hat{b}_{\mathbf{l}}\right)_{\mathbf{l}\in I_N}$ & Fourier coefficients of trigonometric polynomial $K_\mathrm{RF}$ which approximates $K(\yb)$
                  for $\mathbf{y}\in\mathbb{R}^d$, $\norm{\mathbf y}\leq 1/2-\varepsilon_\mathrm{B}$,\newline e.g. obtained by~\eqref{eq::b_l::K_RF}, \\[0.4em]
                & $\left\lbrace \mathbf v_j\right\rbrace_{j=1}^{n}$ & vertex set, $\mathbf v_j\in\mathbb{R}^d$, $\norm{\mathbf{v}_j} \leq 1/4-\varepsilon_\mathrm{B}/2$, \\[0.4em]
                & $\mathbf{x}=[x_1,x_2,\ldots,x_n]^T$  & vector $\in\mathbb{R}^n$. \\
  \end{tabular}
\vspace{1em}
  \begin{enumerate}
  \item Apply $d$-dimensional adjoint NFFT on $\mathbf{x}$ and obtain \newline $\hat{x}_{\mathbf l} \,\approx\, \sum_{i=1}^{n} x_i \,\mathrm{e}^{-2\pi\mathrm{i}\mathbf{l} \mathbf{v}_i} \quad\forall \mathbf{l}\in I_N$.
  \item Multiply result by Fourier coefficients $\left(\hat{b}_{\mathbf l}\right)_{\mathbf{l}\in I_N}$ and obtain $\hat{f}_{\mathbf l} := \hat{b}_{\mathbf l} \, \hat{x}_{\mathbf l} \quad\forall \mathbf{l}\in I_N$.
  \item Apply $d$-dimensional NFFT on $\left(\hat{f}_{\mathbf l}\right)_{\mathbf{l}\in I_N}$ and obtain output \newline $\tilde{f}_\mathrm{RF}(\mathbf{v}_j) \,\approx\, \sum_{\mathbf{l}\in I_N} \hat{f}_{\mathbf{l}} \,\mathrm{e}^{2\pi\mathrm{i}\mathbf{l} \mathbf{v}_j} \quad \forall j=1,\ldots,n$.
  \end{enumerate}
\vspace{1em}
  \begin{tabular}{p{1.4cm}p{3.2cm}p{9.7cm}}
    Output: & $\Big\lbrack \tilde{f}_\mathrm{RF}(\mathbf{v}_j)\Big\rbrack_{j=1,\ldots,n}$ & $\tilde{f}_\mathrm{RF}(\mathbf{v}_j) \approx (\tilde{\Ww}\mathbf{x})_j \quad \forall j=1,\ldots,n$. \\
    \cmidrule{1-3}
 \end{tabular}
  \begin{tabular}{p{2.1cm}p{10.0cm}}
    Complexity: & $\mathcal{O}\big(n\big)$ for fixed accuracy. \\
  \end{tabular}
\end{algorithm}

\begin{figure}
\centering
\begin{tabular}{ll}
\toprule Parameter & Description \\
\midrule
\midrule $N$ & $\in 2\mathbb{N}$ bandwidth (in each dimension) of trigonometric polynomial, \\
& such that $K_\mathrm{RF}\approx K$ \\
\midrule $m$ & $\in \mathbb{N}$ window cut-off parameter of NFFT ($m=8$ gives approximately \\ &  IEEE double precision for default Kaiser-Bessel window) \\
\midrule $p$ & $\in \mathbb{N}$ regularization smoothness for $K_R$ \\ & (default choice $p=m$) \\
\midrule $\varepsilon_\mathrm{B}$ & size of the regularization region, $0\leq \varepsilon_\mathrm{B} \ll 1/2$ \\ & (default choice $\varepsilon_\mathrm{B}=p/N$) \\
\bottomrule
\end{tabular}
\caption{Control parameters for NFFT-based fast summation.\label{tab:fastsum_param}}
\end{figure}

Note that every part of Algorithm~\ref{alg::nfft_fastsum} is deterministic and linear in the input vector $\xx$, i.e., the algorithm constitutes a linear operator that can be written as $\tilde{\Ww} + \Ee$ with an error matrix $\Ee$.
For theoretical error estimates on $\|\Ee \xx\|_\infty = \max_{j} |f(\mathbf{v}_j)-f_\mathrm{RF}(\mathbf{v}_j)|$, we refer to %
\cite{post02,postni04,KuPoSt06}. The basic idea is to start with the estimate
\begin{equation}
\label{eq::KERR_estimate}
|f(\mathbf{v}_j)-f_\mathrm{RF}(\mathbf{v}_j)| \leq \norm{\mathbf{x}}_1 \norm{K_\mathrm{ERR}}_\infty,
\quad
\norm{K_\mathrm{ERR}}_\infty:=\max_{\mathbf{y}\in\mathbb{R}^d, \norm{\mathbf y}\leq 1/2-\varepsilon_\mathrm{B}} |K(\mathbf{y})-K_\mathrm{RF}(\mathbf{y})|,
\end{equation}
caused by the approximation of the kernel $K$ by a trigonometric polynomial $K_\mathrm{RF}$, and to additionally take the errors caused by the NFFT into account.
In practice, one may guess $\norm{K_\mathrm{ERR}}_\infty$ based on sampling values of $K$ and $K_\mathrm{RF}$.
For theoretical error estimates of the NFFT for various window functions, we refer to \cite{KeKuPo09,Ne14}.
In practice, choosing the window cut-off parameter of the NFFT $m=8$ yields approximately IEEE double precision for the default Kaiser-Bessel window, see e.g.\ \cite[Sec.~5.2]{KeKuPo09}.

We again emphasize that Algorithm~\ref{alg::nfft_fastsum} is not restricted to the Gaussian kernel function. Any kernel function~$K$ that can be well approximated by a trigonometric polynomial may be used and the corresponding Fourier coefficients~$\hat{b}_{\mathbf l}$, $\mathbf{l}\in I_N$, are an input parameter of Algorithm~\ref{alg::nfft_fastsum}.

Moreover, for the Gaussian kernel function, one could also use the analytic Fourier coefficients~$\hat{b}_{\mathbf l}$ from~\cite{KuPoSt06} for small values of the scaling parameter~$\sigma$ instead of computing~$\hat{b}_{\mathbf l}$ by interpolation in~\eqref{eq::b_l::K_RF}.
In this case, explicit error bounds for $\norm{K_\mathrm{ERR}}_\infty$ are available.

\subsection{Error propagation for normalized matrices}
\label{sec::nfft::propagation}
As seen in Section~\ref{sec::graph_laplacian}, many applications involving the Graph Laplacian require matrix vector products with a matrix $\Aa$ that itself does not follow the form of \eqref{eq::W_general}, but results from normalization of such a matrix~$\Ww$. This normalization can be written as $\Aa = \Dd^{-1/2} \Ww \Dd^{-1/2}$, where $\Dd = \mathrm{diag}(\Ww \mathbf{1})$. Since our approach includes replacing all matrix-vector products $\Ww \xx$ by the approximations \linebreak $(\tilde{\Ww} + \Ee) \xx - K(\mathbf{0})\, \xx$, this also includes the computation of the degree matrix $\Dd$. The error occurring from this approximation will then propagate to the evaluation error of $\Aa \xx$.

\begin{algorithm}[htb!]
\caption{Fast approximate matrix-vector multiplication $\Aa\mathbf{x}$ using NFFT-based fast summation, with $\Aa = \Dd^{-1/2} \Ww \Dd^{-1/2}$, $\Ww$ as in \eqref{eq::W_general} and $\Dd = \mathrm{diag}(\Ww\mathbf{1})$}
\label{alg::nfft_normalized_matvec}
\vspace{0.5em}
\begin{tabular}{p{1.1cm}p{3.4cm}p{9.7cm}}
	Input:
	& $\sigma$ or $c$, $\{\vb_j\}_{j=1}^n$ & Scaling parameter and vertex set, $\vb_j \in \mathbb{R}^d$, specifying $\Ww$, \\
	& $\mathbf{x}=[x_1,\ldots,x_n]^T$  & vector $\in\mathbb{R}^n$.
\end{tabular}
\vspace{1em}
\begin{enumerate}
	\item Choose correction factor $\rho$ such that $\|\rho \vb_j\|_2 \leq 1/4-\varepsilon_\mathrm{B}/2$ for all $j=1,\ldots,n$. \\
	      Set $\vb_j:=\rho \vb_j$ for all $j=1,\ldots,n$.
	\item For Gaussian and Laplacian RBF kernel, adjust scaling parameter $\sigma:=\rho\sigma$. \\
	      For multiquadric and inverse multiquadric kernel, adjust parameter $c:=c/\rho$.
	\item For the computation of matrix-vector products with the matrix
		\[\tilde{\Ww}_\Ee = %
		\tilde{\Ww} + \Ee = K(\mathbf{0})\, \mathbf{I} + \Ww + \Ee \]
		by Algorithm~\ref{alg::nfft_fastsum},
	    determine appropriate control parameters for the NFFT-based fast summation, see Figure~\ref{tab:fastsum_param}, and
	    obtain Fourier coefficients $\hat{b}_{\mathbf{l}}$, $\mathbf{l}\in I_N$, %
	    e.g.\ by~\eqref{eq::b_l::K_RF} or~\cite{KuPoSt06}.
	\item Compute $\Dd_\Ee = \mathrm{diag}(\tilde{\Ww}_\Ee \mathbf{1} - K(\mathbf{0})\, \mathbf{1}) \approx \mathrm{diag}(\Ww \mathbf{1}) = \Dd$ via Algorithm~\ref{alg::nfft_fastsum} \\
	      (scale output of Algorithm~\ref{alg::nfft_fastsum} by $\rho$ for multiquadric kernel and $1/\rho$ for inverse multiquadric kernel).
	\item Compute $\yb = \Dd_\Ee^{-1/2} \left( \tilde{\Ww}_\Ee(\Dd_\Ee^{-1/2} \mathbf{x}) - K(\mathbf{0})\, \Dd_\Ee^{-1/2} \mathbf{x}\right) \approx \Aa \xx $ via Algorithm~\ref{alg::nfft_fastsum} \\
          (scale output of Algorithm~\ref{alg::nfft_fastsum} by $\rho$ for multiquadric kernel and $1/\rho$ for inverse multiquadric kernel).
\end{enumerate}
\vspace{1em}
\begin{tabular}{p{1.4cm}p{3.2cm}p{9.7cm}}
	Output: & $\yb$ & Approximate result of $\Aa \mathbf{x}$. \\
	\cmidrule{1-3}
\end{tabular}
\begin{tabular}{p{2.1cm}p{10.0cm}}
	Complexity: & $\mathcal{O}\big(n\big)$ for fixed accuracy. \\
\end{tabular}
\end{algorithm}

Algorithm \ref{alg::nfft_normalized_matvec} summarizes the usage of Algorithm \ref{alg::nfft_fastsum} for this case. Note that if multiple matrix-vector products are required, e.g. in an iterative scheme, steps 1--4 can be performed once in a setup phase. The following lemma gives an estimation of the error of Algorithm~\ref{alg::nfft_normalized_matvec} depending on the relative error of Algorithm~\ref{alg::nfft_fastsum}.

\begin{lemma} \label{thm::normalization_propagation}
Let $\Ww \in \mathbb{R}^{n\times n}$ be a matrix with non-negative entries and at least one positive entry per row. Given an error matrix $\Ee \in \mathbb{R}^{n \times n}$, we define $\Ww_\Ee = \Ww + \Ee$ and
\begin{align*}
[d_1,\ldots,d_n]^T &:= \Ww \mathbf{1}, & \Dd &:= \mathrm{diag}(d_1, \ldots, d_n), & \Aa &:= \Dd^{-1/2} \Ww \Dd^{-1/2}, \\
[d_{\Ee,1},\ldots,d_{\Ee,n}]^T &:= \Ww_\Ee \mathbf{1}, & \Dd_\Ee &:= \mathrm{diag}(d_{\Ee,1}, \ldots, d_{\Ee,n}) & \Aa_\Ee &:= \Dd_\Ee^{-1/2} \Ww_\Ee \Dd_\Ee^{-1/2}.
\end{align*}
	Let $d_{\min} > 0$ denote the minimum diagonal entry of $\Dd$ and furthermore set
	\[ \eta := \frac{d_{\min}}{\|\Ww\|_\infty} \qquad \text{and} \qquad \varepsilon := \frac{\|\Ee\|_\infty}{\|\Ww\|_\infty}. \]
	Then, for $\varepsilon < \eta$, it holds
	\begin{equation*}
	\|\Aa - \Aa_\Ee\|_\infty \leq \frac{\varepsilon (1+\eta)}{\eta(\eta-\varepsilon)}.
	\end{equation*}
\end{lemma}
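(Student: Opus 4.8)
The plan is to work throughout with the $\infty$-norm in its guise as the maximum absolute row sum, so that every estimate reduces to controlling a single row sum. First I would record the elementary consequences of $\Ww$ being nonnegative: each row sum equals a degree, $\sum_j W_{ij} = d_i$, whence $\|\Ww\|_\infty = \max_i d_i =: d_{\max}$ and therefore $\eta = d_{\min}/d_{\max} \le 1$. The role of the hypothesis $\varepsilon < \eta$ is to keep $\Dd_\Ee$ invertible: since $|d_{\Ee,i} - d_i| = |(\Ee\mathbf{1})_i| \le \|\Ee\|_\infty = \varepsilon\|\Ww\|_\infty$, every perturbed degree satisfies $d_{\Ee,i} \ge d_{\min} - \varepsilon\|\Ww\|_\infty = \|\Ww\|_\infty(\eta-\varepsilon) > 0$, which makes $\Dd_\Ee^{-1/2}$, and the whole argument, well-defined.

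Next I would separate the two sources of error — the perturbation of the weights and the perturbation of the degrees — by a single algebraic manipulation. Introduce the diagonal matrix $R := \Dd^{1/2}\Dd_\Ee^{-1/2}$, with entries $r_i = \sqrt{d_i/d_{\Ee,i}}$. Since diagonal matrices commute, $\Dd_\Ee^{-1/2} = R\,\Dd^{-1/2}$, and substituting $\Ww_\Ee = \Ww + \Ee$ gives
\[ \Aa_\Ee = \Dd_\Ee^{-1/2}\Ww_\Ee\Dd_\Ee^{-1/2} = R\,\Aa\,R + \Dd_\Ee^{-1/2}\Ee\,\Dd_\Ee^{-1/2}. \]
Hence $\Aa - \Aa_\Ee = (\Aa - R\Aa R) - \Dd_\Ee^{-1/2}\Ee\Dd_\Ee^{-1/2}$, where the first bracket isolates the degree perturbation and the second term isolates the weight perturbation. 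I would then bound the two pieces separately and add.

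For the weight term, each row sum of $\Dd_\Ee^{-1/2}\Ee\Dd_\Ee^{-1/2}$ is at most $d_{\Ee,i}^{-1/2}(\min_j d_{\Ee,j})^{-1/2}\|\Ee\|_\infty$, and the degree lower bound above turns this into $\varepsilon/(\eta-\varepsilon)$. For the degree term the key is that $\Aa$ is \emph{nonnegative}, so the $(i,j)$ entry of $\Aa - R\Aa R$ is $A_{ij}(1 - r_i r_j)$ with $A_{ij}\ge 0$, and its row sum is at most $\big(\max_{i,j}|1-r_ir_j|\big)\sum_j A_{ij}$. I would bound the scalar factor by estimating $r_i^2 = d_i/d_{\Ee,i} \in [\eta/(\eta+\varepsilon),\,\eta/(\eta-\varepsilon)]$ (again via the degree bounds), which yields $|1 - r_ir_j| \le \varepsilon/(\eta-\varepsilon)$, and I would bound $\sum_j A_{ij} \le d_{\min}^{-1}\sum_j W_{ij} = d_i/d_{\min} \le d_{\max}/d_{\min} = 1/\eta$. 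Multiplying gives $\varepsilon/(\eta(\eta-\varepsilon))$ for the degree term, and adding the two pieces reproduces exactly $\varepsilon/(\eta-\varepsilon) + \varepsilon/(\eta(\eta-\varepsilon)) = \varepsilon(1+\eta)/(\eta(\eta-\varepsilon))$.

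The main obstacle is the degree-perturbation term $\Aa - R\Aa R$: a naive estimate via submultiplicativity, $\|\Dd^{-1/2}-\Dd_\Ee^{-1/2}\|_\infty$ times $\|\Ww_\Ee\|_\infty$ times $\|\Dd^{-1/2}\|_\infty$, overcounts and does not produce the clean constant. The decisive points are (i) keeping $\Ee$ sandwiched between identical factors so the weight error contributes only one clean factor, and (ii) exploiting the nonnegativity of $\Ww$ — which forces its row sums to equal the degrees exactly — so that the seemingly awkward quantity $\sum_j A_{ij}$ collapses to $d_i/d_{\min} \le 1/\eta$. Everything else is the scalar estimate of $1 - r_ir_j$, which is routine once the two-sided bound on $r_i^2$ is in hand.
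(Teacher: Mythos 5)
Your proof is correct --- every estimate in it checks out --- but it takes a genuinely different route from the paper's. The paper proves the bound via the three-term telescoping splitting
\[
\Aa - \Aa_\Ee = (\Dd^{-1/2}-\Dd_\Ee^{-1/2})\Ww\Dd^{-1/2} + \Dd_\Ee^{-1/2}(\Ww-\Ww_\Ee)\Dd^{-1/2} + \Dd_\Ee^{-1/2}\Ww_\Ee(\Dd^{-1/2}-\Dd_\Ee^{-1/2}),
\]
bounds each factor by submultiplicativity of the $\infty$-norm (monotonicity of $x\mapsto x^{-1/2}$ gives $\|\Dd^{-1/2}-\Dd_\Ee^{-1/2}\|_\infty\le\big((\eta-\varepsilon)^{-1/2}-\eta^{-1/2}\big)\|\Ww\|_\infty^{-1/2}$), and then collapses the sum of the three crude bounds algebraically to exactly $\varepsilon(1+\eta)/(\eta(\eta-\varepsilon))$. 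You instead absorb the degree perturbation into a symmetric diagonal scaling, $\Aa_\Ee = R\,\Aa\,R+\Dd_\Ee^{-1/2}\Ee\,\Dd_\Ee^{-1/2}$ with $R=\Dd^{1/2}\Dd_\Ee^{-1/2}$, and exploit the entrywise nonnegativity of $\Aa$, which the paper's norm-only argument never uses: your degree term is bounded by $\max_{i,j}|1-r_ir_j|\cdot\max_i\sum_j A_{ij}\le\frac{\varepsilon}{\eta-\varepsilon}\cdot\frac{1}{\eta}$, your weight term by $\frac{\varepsilon}{\eta-\varepsilon}$, and the two add to the same constant (I verified the two-sided bound $r_i^2\in[\eta/(\eta+\varepsilon),\,\eta/(\eta-\varepsilon)]$, both sign cases of $1-r_ir_j$, and the row-sum estimates; all are sound). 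Your route buys interpretability --- weight error and degree error appear as two cleanly separated terms --- and it is even improvable for free: the sharper estimate $\sum_j A_{ij}=d_i^{-1/2}\sum_j W_{ij}\,d_j^{-1/2}\le\sqrt{d_i/d_{\min}}\le\eta^{-1/2}$, in place of your lossier $d_i/d_{\min}\le\eta^{-1}$, yields the better constant $\frac{\varepsilon}{\eta-\varepsilon}\big(1+\eta^{-1/2}\big)$ since $\eta\le 1$. The paper's route buys brevity, needing nothing beyond norm inequalities and elementary algebra. One caveat: your closing claim that a naive submultiplicativity estimate ``overcounts and does not produce the clean constant'' is contradicted by the paper's own proof, which is precisely such an estimate; with the paper's choice of splitting (middle factors $\Ww$, $\Ww-\Ww_\Ee$, $\Ww_\Ee$, and mixed outer factors) the three bounds simplify to the exact constant, so the obstacle you describe is an artifact of your decomposition rather than of the method.
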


\begin{proof}
Due to
\[ |d_i - d_{\Ee,i}| \leq \|\Ww\mathbf{1} - \Ww_\Ee\mathbf{1}\|_\infty = \| \Ee \mathbf{1} \|_\infty \leq \|\Ee\|_\infty \|\mathbf{1}\|_\infty = \|\Ee\|_\infty = \varepsilon \|\Ww\|_\infty \]
and the fact that $x \mapsto x^{-1/2}$ and its first derivative are monotoneously decreasing, we obtain
\begin{align*}
\|\Dd^{-1/2} - \Dd_\Ee^{-1/2}\|_\infty &= \max_{i} \big|d_i^{-1/2} - d_{\Ee,i}^{-1/2}\big| \\
&\leq \max_{i} %
\max_{-\|\Ee\|_\infty \leq \delta \leq \|\Ee\|_\infty} 
\big|d_i^{-1/2} - (d_i + \delta)^{-1/2}\big| \\
&= \max_i \big|d_i^{-1/2} - (d_i - \|\Ee\|_\infty)^{-1/2}\big| \\
&= \big|d_{\min}^{-1/2} - (d_{\min} - \|\Ee\|_\infty)^{-1/2}\big| \\
&= \big|\eta^{-1/2} - (\eta - \varepsilon)^{-1/2}\big| \ \|\Ww\|_\infty^{-1/2} \\
&= \Big((\eta - \varepsilon)^{-1/2} - \eta^{-1/2}\Big) \|\Ww\|_\infty^{-1/2}.
\end{align*}
Analogously we obtain
\[ \|\Dd_\Ee^{-1/2}\|_\infty \leq (\eta-\varepsilon)^{-1/2}\|\Ww\|_\infty^{-1/2}. \]
Together with $\|\Dd^{-1/2}\|_\infty = \eta^{-1/2}\|\Ww\|_\infty^{-1/2}$ and $\|\Ww_\Ee\|_\infty \leq (1+\varepsilon)\|\Ww\|_\infty$, this yields
\begin{align*}
\|\Aa - \Aa_\Ee\|_\infty &= \Big\|(\Dd^{-1/2} - \Dd_\Ee^{-1/2}) \Ww \Dd^{-1/2} \\
&\qquad + \Dd_\Ee^{-1/2} (\Ww - \Ww_\Ee) \Dd^{-1/2} \\
&\qquad\qquad + \Dd_\Ee^{-1/2} \Ww_\Ee (\Dd^{-1/2} - \Dd_\Ee^{-1/2}) \Big\|_\infty \\
&\leq \Big((\eta - \varepsilon)^{-1/2} - \eta^{-1/2}\Big) \|\Ww\|_\infty^{-1/2} \; \|\Ww\|_\infty \; \eta^{-1/2}\|\Ww\|_\infty^{-1/2} \\
&\qquad + (\eta-\varepsilon)^{-1/2} \|\Ww\|_\infty^{-1/2} \; \varepsilon \|\Ww\|_\infty \; \eta^{-1/2}\|\Ww\|_\infty^{-1/2} \\
&\qquad\qquad + (\eta-\varepsilon)^{-1/2} \|\Ww\|_\infty^{-1/2} \; (1+\varepsilon) \|\Ww\|_\infty \; \Big((\eta - \varepsilon)^{-1/2} - \eta^{-1/2}\Big) \|\Ww\|_\infty^{-1/2} \\
&= \Big((\eta - \varepsilon)^{-1/2} - \eta^{-1/2}\Big) \Big(\eta^{-1/2} + (1+\underline{\varepsilon)(\eta - \varepsilon)^{-1/2}}\Big) + \underline{(\eta - \varepsilon)^{-1/2}  \varepsilon} \eta^{-1/2}. \\
\intertext{Now detach the left underlined part from its paranthesed expression and combine it with the right underlined part:}
&= \Big((\eta - \varepsilon)^{-1/2} - \eta^{-1/2}\Big) \Big((\eta - \varepsilon)^{-1/2} + \eta^{-1/2}\Big) \\
&\qquad\qquad + \underline{\varepsilon (\eta - \varepsilon)^{-1/2}} \Big((\eta - \varepsilon)^{-1/2} - \eta^{-1/2} + \eta^{-1/2}\Big) \\
\intertext{Resolve the binomial expression in the first line and simplify the second line:} 
&= (\eta - \varepsilon)^{-1} - \eta^{-1} + \varepsilon (\eta - \varepsilon)^{-1} = \frac{(1+\varepsilon) \eta - (\eta-\varepsilon)}{\eta (\eta - \varepsilon)} = \frac{\varepsilon(1 + \eta)}{\eta (\eta - \varepsilon)}.
\end{align*}
This concludes the proof for the desired inequality.
\end{proof}

The requirement $\varepsilon < \eta$ means that $\|\Ee\|_\infty$ must be smaller than the smallest diagonal entry in $\Dd$. This condition cannot be avoided since otherwise, negative entries in $\Dd_\Ee$ could not be ruled out, leading to imaginary entries in $\Dd_\Ee^{-1/2}$ and thus in $\Aa_\Ee$. On the other hand, if $\varepsilon$ is well below $\eta$, Lemma~\ref{thm::normalization_propagation} yields that the absolute error in $\Aa$ is linear in $\varepsilon$, which is the relative error of Algorithm~\ref{alg::nfft_fastsum}. 

Alternatively, by ignoring the error caused by the NFFT, we obtain error estimations of the form
\begin{equation}
\label{eq::epsilon_KERR_estimate}
\|\Ee \xx \|_\infty \lesssim \|K_\mathrm{ERR}\|_\infty \|\xx\|_1 \leq n \, \|K_\mathrm{ERR}\|_\infty \|\xx\|_\infty \qquad \Rightarrow \qquad \varepsilon = \frac{\|\Ee\|_\infty}{\|\Ww\|_\infty} \lesssim  n \, \frac{\|K_\mathrm{ERR}\|_\infty}{\|\Ww\|_\infty}.
\end{equation}
In other words, the perturbation grows linearly in the size of the dataset. If either $\|\Ww\|_\infty$ or $d_{\min}$ grew less fast, then Lemma~\ref{thm::normalization_propagation} would not be applicable for large $n$ because $\varepsilon$ would eventually supersede $\eta$. However, if we assume that increasing $n$ means adding more similarly-distributed data points to the dataset, the \emph{average} entry in $\Ww$ does not change and thus all row sums of $\Ww$ also grow linearly in $n$, including $d_{\min}$ and the maximum row sum $\|\Ww\|_\infty$. A mathematical quantification of this observation is beyond the scope of this article, but in practice, the values for $\eta$ and $\varepsilon$ can be approximated and monitored to give a-posteriori error bounds. One way to do this is by using \eqref{eq::epsilon_KERR_estimate} and approximating $\|K_\mathrm{ERR}\|_\infty$ via \eqref{eq::KERR_estimate}, where the maximum can be discretized in a large number of randomly drawn sample points. 
The accuracy of this approximation can be validated by explicitly computing the exact absolute row sum $\|\Ee\|_\infty$ via
\begin{equation}
\label{eq::epsilon_norm_aposteriori}
\|\Ee\|_\infty = \left\| \sum_{i=1}^n \left| \Ee \mathbf{e}_i \right| \right\|_\infty = \left\|\sum_{i=1}^n \left| \tilde{\Ww}_\Ee \mathbf{e}_i - \Ww \mathbf{e}_i - K(\mathbf{0}) \mathbf{e}_i \right| \right\|_\infty, 
\end{equation}
where $|\cdot|$ is applied elementwise, $\mathbf{e}_i$ denotes the $i$-th unit vector, and matrix-vector products with $\tilde{\Ww}_\Ee = \Ww + K(\mathbf{0}) \mathbf{I} + \Ee$ are evaluated using Algorithm~\ref{alg::nfft_fastsum}. The effort of computing \eqref{eq::epsilon_norm_aposteriori} is $\mathcal{O}(n^2)$.
Equivalently, the true value for $\|\Aa - \Aa_\Ee\|_\infty$ can be computed via
\begin{equation*}
\|\Aa - \Aa_\Ee\|_\infty = \left\| \sum_{i=1}^n \left| \Aa \mathbf{e}_i - \Aa_\Ee \mathbf{e}_i \right| \right\|_\infty.
\end{equation*}

\color{black}

\section{Krylov subspace methods and NFFT}
\label{sec::lanczos}

The main contribution of this paper is the usage of NFFT-based fast summation for accelerating Krylov subspace methods, which are the state-of-the-art schemes for the solution of linear equation systems, eigenvalue problems, and more \cite{book::saad}. In the case of large dense matrices, the computational bottleneck is the setup of and multiplication with the system matrix itself. We will here exemplarily illustrate this for the Lanczos algorithm \cite{lanczos}, which is the standard method for computation of a few dominating, i.e. largest, eigenvalues of a symmetric matrix $\Aa$ \cite{book::parlett, book::golubvanloan}. It is based on looking for an $\Aa$-invariant subspace in the Krylov space 
\begin{equation*}
	\mathcal{K}_k(\Aa,\rb)=\mathrm{span}\left\lbrace \rb,\Aa \rb,\Aa^2\rb,\Aa^3\rb,\ldots,\Aa^{k-1}\rb\right\rbrace.
\end{equation*}
This is achieved by iteratively constructing an orthonormal basis $\qb_1,\ldots,\qb_k$ of this space in such a way that the matrix $\Qq_k = [\qb_1,\ldots,\qb_k] \in \R^{n\times k}$ yields a tridiagonalization of $\Aa$, i.e.
\begin{equation*}
	\Qq_k^T \Aa \Qq_k = \Tt_k =
	\begin{bmatrix}
		\alpha_1 & \beta_2 & &\\
		\beta_2 & \alpha_2 & \ddots &\\
		& \ddots & \ddots & \beta_k \\
		& & \beta_k & \alpha_k
	\end{bmatrix}.
\end{equation*}
Such a matrix $\Qq_k$ as well as the entries of $\Tt_k$ can be computed by the iteration
\[
	\qb_1 = \frac{\mathbf r}{\|\mathbf{r}\|}, \qquad\quad \qb_{k+1} = \frac{1}{\beta_{k+1}} \left( \Aa \qb_k - \alpha_k \qb_k - \beta_k \qb_{k-1} \right) \quad \forall k = 1,2,\ldots
\]
where $\alpha_k = \qb_k^T \Aa \qb_k$ and $\beta_{k+1} = \|\Aa\qb_k - \alpha_k\qb_k - \beta_k\qb_{k-1}\|$. The remarkable fact that this iteration produces orthonormal vectors is a consequence of the symmetry of $\Aa$. 
We now summarize the first $k$ steps of the Lanczos process in the relation
\begin{equation}
\label{krylov1}
\Aa\Qq_k=\Qq_k\Tt_k+\beta_{k+1}\qb_{k+1}\mathbf{e}_{k}^{T},
\end{equation}
where $\mathbf{e}_j$ denotes the $j$-th standard basis vector of the appropriate dimension.
The eigenvalues and eigenvectors of the small matrix $\Tt_k$ are called the Ritz values and vectors, respectively, and can be computed efficiently. From $\Tt_k \mathbf{w} = \lambda \mathbf{w}$ we then obtain
\[
	\Aa \Qq_k \mathbf{w} = \Qq_k \Tt_k \mathbf{w} + \beta_{k+1} \qb_{k+1} \mathbf{e}_k^T \mathbf{w} = \lambda \Qq_k \mathbf{w} + \beta_{k+1} w_k \qb_{k+1},
\]
where $w_k$ is the $k$-th component of the Ritz vector $\mathbf{w}$. We finally see via
\[
	\|\Aa \Qq_k \mathbf{w} - \lambda\Qq_k \mathbf{w}\| = |\beta_{k+1} \mathbf{w}_k| \leq |\beta_{k+1}|
\]
that a small value $|\beta_{k+1}|$ indicates that $(\lambda, \Qq_k \mathbf{w})$ is a good approximation to an eigenpair of~$\Aa$ and that the Krylov space is close to containing an $\Aa$-invariant subspace.
There are many more practical issues that make the implementation of the Lanczos process more efficient and robust. We do not discuss these points in detail but refer to \cite{book::parlett,lehoucq1998arpack} for the details.

Additionally, we want to point out that the above procedure can also be used for the solution of linear systems of equations. Standard methods based on the Lanczos method are the conjugate gradients method \cite{cg} and the minimal residual method \cite{minres}, which are tailored for the solution of linear systems of the form $\mathbf{A}\mathbf{x}=\mathbf{b}$. Note that such applications involving the graph Laplacian are commonly found in kernel based methods \cite{christopher2016pattern}. In the nonsymmetric case that comes up e.g. when considering $\Ll_w$, we can employ the Arnoldi method \cite{book::saad}, which relies on a similar iteration where $\Tt_k$ is replaced by an upper Hessenberg matrix.

One main contribution of this paper is the fact that by evaluating matrix-vector products via the NFFT-based Algorithms~\ref{alg::nfft_fastsum} or \ref{alg::nfft_normalized_matvec}, Krylov subspace methods are still applicable for dense matrices that are too large to store, let alone apply, as long as they stem from the kernel structure of \eqref{eq::W_general} or normalization of such a matrix.
In our experiments, this method will be denoted as NFFT-based Lanczos method.

A detailed discussion of the effect of inexact matrix-vector products on Krylov-based approximations can be found in \cite{simoncini2003theory}.

\section{Alternative eigenvalue algorithm: The Nystr\"om method}
\label{sec::nystrom}

\subsection{The traditional Nystr\"om extension}
\label{sec::nystrom::traditional}

The Nystr\"om extension is currently used as a method of choice to compute eigenvalue approximations of kernel-based matrices that are too large to allow for direct eigenvalue computation. See e.g. \cite{GaMeBeFlPe14} and \cite{MeKoBe13} for its applications in different settings. Originally introduced to the matrix computations context in \cite{WS01nystrom}, further improvements have been suggested in \cite{FowBCM04} and \cite{drineas2005nystrom} and its usage for classification problems has been proposed in \cite{BerF12}. It is based on dividing the data points into a sample set $X$ of $L$ nodes and its complement $Y$. After permutation, the adjacency matrix $\Ww$ can be split into blocks
\[
\Ww=
 \begin{bmatrix} \Ww_{XX} & \Ww_{XY} \\ \Ww_{XY}^{T} & \Ww_{YY} \end{bmatrix},
\]
where the blocks $\Ww_{XX}\in \mathbb{R}^{L\times L}$ and $\Ww_{YY} \in \mathbb{R}^{(n-L)\times(n-L)}$ are the adjacency matrices of the canonical subgraphs with node sets $X$ and $Y$, respectively, and the block $\Ww_{XY} \in \mathbb{R}^{L \times (n-L)}$ contains the similarities between all combinations of nodes from $X$ and $Y$.

The basic idea of the Nystr\"om method is to compute only $\Ww_{XX}$ and $\Ww_{XY}$ explicitly, but not the remaining block $\Ww_{YY}$.
If $L\ll n$, the approach significantly decreases the required number of data point comparisons. Assuming that $\Ww_{XX}$ is regular, the method approximates~$\Ww$ by
\begin{equation} \label{eq:nystromapprox}
\Ww \approx \Ww_\Ee = \begin{bmatrix} \Ww_{XX} \\ \Ww_{XY}^T \end{bmatrix} \Ww_{XX}^{-1} \begin{bmatrix} \Ww_{XX} & \Ww_{XY} \end{bmatrix} = \begin{bmatrix} \Ww_{XX} & \Ww_{XY} \\ \Ww_{XY}^T & \Ww_{XY}^{T} \Ww_{XX}^{-1} \Ww_{XY} \end{bmatrix},
\end{equation}
which constitutes a rank-$L$ approximation due to the size and regularity of $\Ww_{XX}$. This formula is used once in approximating the degree matrix $\Dd$ by $\Dd_\Ee = \diag(\Ww_\Ee \mathbf{1})$ and once in approximating the eigenvalues of $\Aa$ via the rank-$L$ eigenvalue decomposition

\[
\Aa_\Ee := \Dd_\Ee^{-1/2} \, \Ww_\Ee \, \Dd_\Ee^{-1/2} = \Vv_L \mathbf{\Lambda}_L \Vv_L^*.
\]
This can be computed without having to set up the full matrix, e.g. by the technique described in \cite{FowBCM04} made up mainly of two singular value decompositions of $(L\times L)$-sized matrices, which is technically only applicable if $\Ww$ is positive definite. Alternatively, we have achieved better results by computing the QR factorization $\mathbf{\hat Q}\mathbf{\hat R} := \Dd_\Ee^{-1/2} [\Ww_{XX} \; \Ww_{XY}]^T$ and the eigenvalue decomposition $\Uu_L \mathbf{\Lambda}_L\Uu_L^T := \mathbf{\hat R}\Ww_{XX}^{-1}\mathbf{\hat R}^T$, leading to the eigenvector matrix $\Vv_L = \mathbf{\hat Q} \Uu_L$.
The arithmetic complexity of this algorithm can be easily confirmed to be $\mathcal{O}\big(n\,L^2)$.

The eigenvalue accuracy depends strongly on the quality of the approximation
\[ \Ww_{YY} \approx \Ww_{XY}^T \Ww_{XX}^{-1} \Ww_{XY}. \]
Since the sample set $X$ is a randomly chosen subset of the indices from $1,\ldots,n$,
its size $L$ is the decisive method parameter and its choice is a nontrivial task. On the one hand, $L$ needs to be small for the method to be efficient. On the other hand, a too small choice of $L$ may cause extreme errors, especially because the approximation error in $\Dd_\Ee$ propagates to the eigenvalue computation.
In spite of the positivity of the diagonal of $\Dd$, negative entries in $\Dd_\Ee$ cannot be ruled out and are observed in practice. Hence imaginary entries may occur in $\Dd_\Ee^{-1/2}$ and thus $\Aa_\Ee$, making the results extremely unreliable. This behaviour follows the same structure as Lemma~\ref{thm::normalization_propagation}, however, we do not have a meaningful bound on $\|\Ww_{YY} - \Ww_{XY}^T \Ww_{XX}^{-1} \Ww_{XY}\|_\infty$ that would guarantee favorable error behaviour.

\subsection{A NFFT-based accelerated Nystr\"om-Gaussian method}
\label{sec::nystrom::gaussian_nfft}

Another important contribution of this paper is the development of an improved Nystr\"om method, which utilizes the NFFT-based fast summation from Section~\ref{sec::nfft}.
It is based on a slightly different algorithm that has been recently introduced as a Nystr\"om method, cf.\ \cite{Martinsson18} and the references therein.
Their basic idea is rewriting the traditional Nystr\"om approximation as
\[
 \Aa \approx (\Aa \Qq) (\Qq^T \Aa \Qq)^{-1} (\Aa \Qq)^T%
\]
where $\Qq \in \mathbb{R}^{n \times L}$ is a matrix with orthogonal columns. 
If $\Qq$ holds the first $L$ columns of a permutation matrix, one obtains the traditional Nystr\"om method from Section~\ref{sec::nystrom::traditional}.
Inspired by similar randomized linear algebra algorithm such as randomized singular value decomposition,
this choice of $\Qq$ is replaced in~\cite{Martinsson18} by $\Qq = \mathrm{orth}(\Aa \mathbf{G})$, where $\mathbf{G} \in \mathbb{R}^{n \times L}$ is a Gaussian matrix with normally distributed random entries and $\mathrm{orth}$ denotes column-wise orthonormalization. Unfortunately, this setup requires $2L$ matrix-vector products with the full matrix $\Aa$.

We now propose accelerating these matrix-vector products by computing $\Aa \Qq$ column-wise via the NFFT-based fast summation Algorithm~\ref{alg::nfft_fastsum} in order to avoid full matrix setup or slow direct matrix-vector products. In addition, we propose replacing the inverse $(\Qq^T \Aa \Qq)^{-1}$ by a low-rank approximation based only on the $M\in\mathbb{N}$ largest eigenvalues of $\Qq^T \Aa \Qq$. This way, a rank-$M$ approximation of $\Aa$ is produced, where $M$ may be the actual number of required eigenvalues or larger. The resulting method ``Nystr\"om-Gaussian-NFFT'' is presented in Algorithm~\ref{alg::nystrom_hybrid}.
Its arithmetic complexity is $\mathcal{O}\big(n\,L^2\big)$. On the first glance, this arithmetic complexity seems to be identical to the one of the traditional Nystr\"om method from Section~\ref{sec::nystrom::traditional}. However, as we observe in the numerical tests in Section~\ref{sec::res::ev}, we may choose the parameter~$L$ distinctly smaller for  Algorithm~\ref{alg::nystrom_hybrid}, i.e., $L\sim k$, where $k$ is the number of eigenvalues and eigenvectors. Then, the resulting arithmetic complexity is $\mathcal{O}\big(n\,k^2\big)$.

\begin{algorithm}[htb!]
\caption{NFFT-based accelerated Nystr\"om-Gaussian method (``Nystr\"om-Gaussian-NFFT'') for eigenvalue approximation $\Vv_k \mathbf{\Lambda}_k \Vv_k^* \approx \Aa := \Dd^{-1/2} \Ww \Dd^{-1/2}$.}\label{alg::nystrom_hybrid}
\vspace{0.5em}
  \begin{tabular}{p{1.4cm}p{2.6cm}p{10cm}}
    Input:      %
		        & $\sigma$, $\left\lbrace \mathbf v_j\right\rbrace_{j=1}^{n}$ & scaling parameter and vertex set, $\mathbf v_j\in\mathbb{R}^d$, specifying $\Ww$, \\
                & $k$& number of desired eigenvalues $\in\mathbb{N}$, \\
                & $L$& number of random Gaussian columns $\geq M\geq k$, \\
                & $M$& rank of inversion $\geq k$.
  \end{tabular}
\vspace{1em}
  \begin{enumerate}
  \item Setup the NFFT-based fast summation parameters for computing matrix-vector products with $\Ww$ using Algorithm~\ref{alg::nfft_fastsum}, cf.\ Section~\ref{sec::nfft}.
  \item Compute the degree matrix $\Dd = \diag(\Ww \mathbf{1})$ using Algorithm~\ref{alg::nfft_fastsum}. %
  \item Setup a random Gaussian matrix $\mathbf{G} \in \mathbb{R}^{n \times L}$, compute $\mathbf{Y} = \Aa \mathbf{G}$ column-wise via Algorithm~\ref{alg::nfft_fastsum}, and
	$\Qq = \mathrm{orth}(\mathbf{Y}) \in \R^{n\times L}$ by QR-factorization.
  \item Compute $\mathbf{B}_1 = \Aa \Qq \in \R^{n\times L}$ column-wise via Algorithm~\ref{alg::nfft_fastsum} and $\mathbf{B}_2 = \Qq^T \mathbf{B}_1 \in \R^{L\times L}$.
  \item Compute the diagonal matrix $\mathbf{\Sigma}_M$ of the $M$ largest positive eigenvalues of $\mathbf{B_2}$ and the matrix $\Uu_M \in \R^{L\times M}$ holding the corresponding orthonormal eigenvectors as columns. %
  \item Compute the QR-factorization $\mathbf{\hat Q} \mathbf{\hat R} = \mathbf{B}_1 \Uu_M$, \; $\mathbf{\hat Q} \in \R^{n\times M}$, $\mathbf{\hat R} \in \R^{M\times M}$.
  \item Compute the eigenvalue decomposition $\hat{\Uu}_M \mathbf{\Lambda}_M \hat{\Uu}_M^T = \mathbf{\hat R} \mathbf{\Sigma}_M^{-1} \mathbf{\hat R}^T$ and set $\Vv_M = \hat{\Qq} \hat{\Uu}_M$.
  \item Put the $k$ largest eigenvalues from $\mathbf{\Lambda}_M$ into the diagonal matrix~$\mathbf{\Lambda}_k$ and corresponding eigenvectors from $\Vv_M$ into $\Vv_k$.
  \end{enumerate}
\vspace{1em}
  \begin{tabular}{p{1.4cm}p{2.6cm}p{10.35cm}}
    Output: & $\mathbf{\Lambda}_k \in \mathbb{R}^{k \times k}$ & diagonal matrix of approximated largest eigenvalues of $\Aa$, \\
	    & $\Vv_k \in \mathbb{R}^{n \times k}$ & corresponding orthonormal eigenvector matrix. \\
    \cmidrule{1-3}
 \end{tabular}
   \begin{tabular}{p{2.1cm}p{10.0cm}}
     Complexity: & $\mathcal{O}\big(n L^2\big)$ \\
   \end{tabular}
\end{algorithm}

\section{Numerical results}
\label{sec::res}

All our experiments are performed using \textsc{Matlab} implementations based on the NFFT3 library and \textsc{Matlab}'s \texttt{eigs} function. A short example code can be found on the homepage of the authors.\footnote{\url{https://www.tu-chemnitz.de/mathematik/wire/people/files_alfke/NFFT-Lanczos-Example-v1.tar.gz}}

\subsection{Accuracy and runtime of eigenvalue computations}
\label{sec::res::ev}

\begin{sloppypar}
We use the function \texttt{generateSpiralDataWithLabels.m}\footnote{\url{https://sites.google.com/site/kittipat/matlabtechniques}} to generate varying sets of three-dimensional data. The data points are in the form of a spiral and we can specify the number of classes as well as the number of points per class. We generate data sets with 5 classes and equal numbers of points per class, which have a total number of data points $n\in\{2\,000, 5\,000, 10\,000, 20\,000, 50\,000, 100\,000\}$. For the generation, we use the default parameters $h=10$ and $r=2$ in \texttt{generateSpiralDataWithLabels.m}.
For each~$n$, we generate 5 random spiral data sets.
In Figure~\ref{fig::illustration_datasets::spiral}, we visualize an example data set with $n=2\,000$ total points.
For the adjacency matrix~$\Ww$, we set the scaling parameter $\sigma=3.5$.
Using the NFFT-based Lanczos method from Section~\ref{sec::lanczos}, we compute the 10 largest eigenvalues and the corresponding eigenvectors of the matrix $\Aa:=\Dd^{-1/2}\Ww\Dd^{-1/2}$ for each data set.
We consider three different parameter setups for the NFFT in Algorithm~\ref{alg::nfft_fastsum}, achieving different accuracies. We set the bandwidth $N=16$ and the window cut-off parameter $m=2$ in setup \#1, $N=32$ and $m=4$ in setup \#2, as well as $N=64$ and $m=7$ in setup \#3. For all three setups, we use $\varepsilon_\mathrm{B}=0$.
For comparison, we also apply the Nystr\"om method from Section~\ref{sec::nystrom::traditional}, where we perform 10 repetitions for each data set, since the method uses random sub-sampling in order to obtain a rank-$L$ approximation of the adjacency matrix $\Ww$. We consider two different Nystr\"om setups with rank $L\in\{n/10, n/4\}$.
Moreover, we use the hybrid Nystr\"om-Gaussian-NFFT method from Algorithm~\ref{alg::nystrom_hybrid} in Section~\ref{sec::nystrom::gaussian_nfft} with $L\in\{20,50\}$ Gaussian columns, parameter $M=10$ as well as fast summation parameters corresponding to setup~\#2, where we perform 10 repetitions for each data set.
Additionally, we compute the eigenvalues and eigenvectors by a direct method, which applies the Lanczos method using full matrix-vector products with the adjacency matrix $\Ww$.
For the Nystr\"om method from Section~\ref{sec::nystrom::traditional} and the direct computation method, we only run tests for a total number of data points $n\in\{2\,000, 5\,000, 10\,000, 20\,000\}$ due to long runtimes.
\end{sloppypar}

\begin{figure}[htb!]
\centering{
\subfloat[Spiral example with $n=2\,000$ points. %
]{\label{fig::illustration_datasets::spiral}
\begin{tikzpicture}[baseline]
  \begin{axis}[view={130}{25},enlargelimits=false,font=\footnotesize,width=0.44\textwidth,height=0.45\textwidth,
    scatter/classes={
      1={mark=*,mark size=0.25,magenta,style={solid, fill=magenta}},
      2={mark=*,mark size=0.25,blue,style={solid, fill=blue}},
      3={mark=*,mark size=0.25,draw=green,style={solid, fill=green}},
      4={mark=*,mark size=0.25,draw=cyan,style={solid, fill=cyan}},
      5={mark=*,mark size=0.25,draw=red,style={solid, fill=red}}
    },
  ]
    \addplot3[scatter,only marks,scatter src=explicit symbolic] file {Plots/spiral_2000.txt};
  \end{axis} 
\end{tikzpicture}
}
\hfill
\subfloat[Crescent-fullmoon example with $n=100\,000$ points.]{\label{fig::illustration_datasets::crescentfullmoon}
\hspace{2em}
\begin{tikzpicture}[baseline]
  \begin{axis}[enlargelimits={abs=0.5},font=\footnotesize,width=0.5\textwidth,
    axis equal=true,
  ]
  \addplot graphics[xmin=-8.3,xmax=8.3,ymin=-8.3,ymax=5.3] {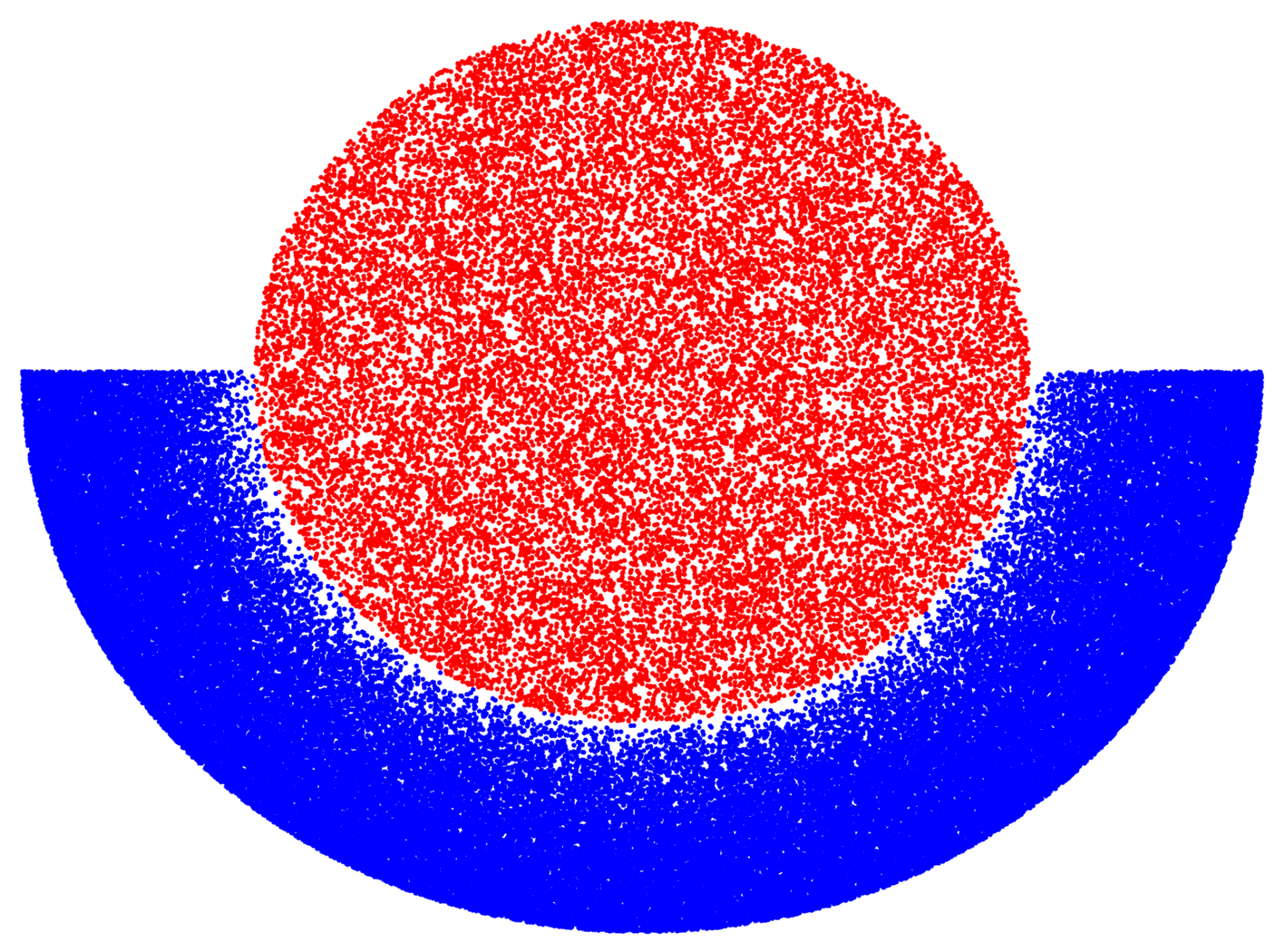};
  \end{axis}
\end{tikzpicture}
\hspace{2em}
}
}
\caption{Illustration of spiral and crescent-fullmoon data sets. 
}\label{fig:illustration_datasets}
\end{figure}

\begin{figure}[htb!]
\centering{
\subfloat[Comparison of eigenvalue accuracies.]{\label{fig::compare_ev_residuals::eigenvalues}
\begin{tikzpicture}[baseline]
  \begin{axis}[font=\footnotesize,enlargelimits=true,xmin=2000,xmax=50000,height=0.5\textwidth, width=0.47\textwidth, grid=major, xlabel={$n$}, ylabel={Min / avg. / max. of \begin{NoHyper}\eqref{equ::max_ev_error}\end{NoHyper}},
    xtick={2000,5000,10000,20000,50000},xticklabel={\pgfkeys{/pgf/fpu=true}\pgfmathparse{exp(\tick)}\pgfmathprintnumber[1000 sep={\,},fixed relative, precision=3]{\pgfmathresult}\pgfkeys{/pgf/fpu=false}},
    ytick={1e0,1e-1,1e-2,1e-3,1e-4,1e-5,1e-9,1e-10,1e-14,1e-15},
    legend style={legend cell align=left, align=left, at={(0.4,1.2)}, anchor=south},%
    xmode=log,ymode=log,
    xmajorgrids=false,
  ]
    \addplot[blue,mark=triangle*,mark size=1,thick,mark options={solid},
                error bars/.cd, y dir=plus, y explicit, error bar style={solid}, error mark options={rotate=90,mark size=2,thick}] coordinates {
      (2000,9.499e-01) -= (0,9.251e-01) += (0,2.178e+01)  (5000,6.390e-01) -= (0,6.281e-01) += (0,2.144e+01)  (10000,3.562e-02) -= (0,3.145e-02) += (0,3.089e-01)  (20000,2.587e-01) -= (0,2.567e-01) += (0,9.304e+00)
    };
    \addplot[cyan,dashed,mark=triangle*,mark size=1,thick,mark options={solid},
                    error bars/.cd, y dir=both, y explicit, error bar style={solid}, error mark options={rotate=90,mark size=2,thick}] coordinates {
      (2000,1.466e+00) -= (0,1.456e+00) += (0,5.372e+01)  (5000,6.535e-02) -= (0,6.271e-02) += (0,1.806e+00)  (10000,8.805e-02) -= (0,8.667e-02) += (0,3.462e+00)  (20000,1.749e-02) -= (0,1.683e-02) += (0,2.006e-01)
    };
    \addplot[darkgreen,mark=triangle*,mark size=1,thick,mark options={solid,rotate=180},
                        error bars/.cd, y dir=both, y explicit, error bar style={solid}, error mark options={rotate=90,mark size=2,thick}] coordinates {
      (2000,4.395e-03) -= (0,3.575e-03) += (0,1.121e-02)  (5000,2.631e-03) -= (0,2.244e-03) += (0,4.089e-03)  (10000,2.612e-03) -= (0,2.123e-03) += (0,5.665e-03)  (20000,2.048e-03) -= (0,1.689e-03) += (0,5.421e-03)  (50000,1.659e-03) -= (0,1.246e-03) += (0,2.428e-03)
    };
    \addplot[darkgreen,mark=triangle*,dashed,mark size=1,thick,mark options={solid,rotate=180},
                        error bars/.cd, y dir=both, y explicit, error bar style={solid}, error mark options={rotate=90,mark size=2,thick}] coordinates {
      (2000,5.259e-05) -= (0,4.604e-05) += (0,1.127e-04)  (5000,3.677e-05) -= (0,2.358e-05) += (0,4.265e-05)  (10000,2.802e-05) -= (0,1.690e-05) += (0,5.305e-05)  (20000,1.598e-05) -= (0,1.078e-05) += (0,1.968e-05)  (50000,7.345e-06) -= (0,4.853e-06) += (0,1.659e-05)
    };
    \addplot[red,mark=*,mark size=1,thick,mark options={solid},
                        error bars/.cd, y dir=both, y explicit, error bar style={solid}, error mark options={rotate=90,mark size=2,thick}] coordinates {
      (2000,9.451e-05) -= (0,8.367e-05) += (0,1.989e-04)  (5000,1.049e-04) -= (0,7.427e-05) += (0,1.893e-04)  (10000,1.595e-04) -= (0,6.495e-05) += (0,8.507e-05)  (20000,2.609e-04) -= (0,1.623e-04) += (0,1.911e-04)  (50000,3.275e-04) -= (0,1.790e-04) += (0,2.862e-04)
    };
    \addplot[red,dashed,mark=*,mark size=1,thick,mark options={solid},
                            error bars/.cd, y dir=both, y explicit, error bar style={solid}, error mark options={rotate=90,mark size=2,thick}] coordinates {
      (2000,2.848e-10) -= (0,8.795e-11) += (0,1.671e-10)  (5000,1.834e-10) -= (0,4.858e-11) += (0,2.883e-11)  (10000,1.718e-10) -= (0,7.715e-11) += (0,1.783e-10)  (20000,8.173e-11) -= (0,3.193e-11) += (0,3.552e-11)  (50000,6.653e-11) -= (0,3.210e-11) += (0,4.687e-11)
    };
    \addplot[red,dashdotted,mark=*,mark size=1,thick,mark options={solid},
                            error bars/.cd, y dir=both, y explicit, error bar style={solid}, error mark options={rotate=90,mark size=2,thick}] coordinates {
      (2000,4.494e-15) -= (0,2.162e-15) += (0,4.499e-15)  (5000,3.331e-15) -= (0,6.661e-16) += (0,1.443e-15)  (10000,3.086e-15) -= (0,7.550e-16) += (0,1.465e-15)  (20000,3.375e-15) -= (0,1.155e-15) += (0,9.548e-16)  (50000,2.642e-15) -= (0,1.310e-15) += (0,1.132e-15)
    };
    \addplot[amber,mark=pentagon*,mark size=1.5,thick,mark options={solid},
                        error bars/.cd, y dir=both, y explicit, error bar style={solid}, error mark options={mark size=2,thick,rotate=90}] coordinates {
  (2000,5.466e-04) -= (0,1.071e-04) += (0,1.342e-04)  (5000,4.650e-04) -= (0,2.721e-04) += (0,2.169e-04)  (10000,4.211e-04) -= (0,2.311e-04) += (0,9.142e-05)  (20000,1.900e-04) -= (0,2.632e-05) += (0,4.215e-05)  (50000,1.648e-04) -= (0,2.284e-05) += (0,5.352e-05)
    };
    \addplot[amber,dashed,mark=pentagon*,mark size=1.5,thick,mark options={solid},
                        error bars/.cd, y dir=both, y explicit, error bar style={solid}, error mark options={mark size=2,thick,rotate=90}] coordinates {
  (2000,2.376e-15) -= (0,1.266e-15) += (0,9.548e-16)  (5000,1.465e-08) -= (0,3.465e-09) += (0,5.349e-09)  (10000,1.465e-08) -= (0,6.437e-09) += (0,5.296e-09)  (20000,1.088e-08) -= (0,5.740e-09) += (0,5.549e-09)  (50000,9.464e-09) -= (0,3.987e-09) += (0,5.428e-09)
    };
    \addplot[amber,dashdotted,mark=pentagon*,mark size=1.5,thick,mark options={solid},
                        error bars/.cd, y dir=both, y explicit, error bar style={solid}, error mark options={mark size=2,thick,rotate=90}] coordinates {
  (2000,2.376e-15) -= (0,1.488e-15) += (0,2.176e-15)  (5000,3.253e-15) -= (0,1.588e-15) += (0,2.187e-15)  (10000,1.761e-14) -= (0,1.317e-14) += (0,1.925e-14)  (20000,3.009e-14) -= (0,1.099e-14) += (0,1.821e-14)  (50000,3.633e-14) -= (0,1.890e-14) += (0,3.406e-14)
    };
  \end{axis}
\end{tikzpicture}
}
\subfloat[Comparison of eigenvector accuracies.]{\label{fig::compare_ev_residuals::residuals}
\begin{tikzpicture}[baseline]
  \begin{axis}[font=\footnotesize,enlargelimits=true,xmin=2000,xmax=100000,height=0.5\textwidth, width=0.47\textwidth, grid=major, xlabel={$n$},
    ylabel={Min / avg. / max. of \begin{NoHyper}\eqref{equ::max_res_norm}\end{NoHyper}},
    xtick={2000,5000,20000,100000},xticklabel={\pgfkeys{/pgf/fpu=true}\pgfmathparse{exp(\tick)}\pgfmathprintnumber[1000 sep={\,},fixed relative, precision=3]{\pgfmathresult}\pgfkeys{/pgf/fpu=false}},
    ytick={1e0,1e-1,1e-2,1e-3,1e-8,1e-13,1e-15},
    legend style={legend cell align=left, align=left, at={(0.4,1.2)}, anchor=south},%
    xmode=log,ymode=log,
    xmajorgrids=false,
  ]
    \addplot[blue,mark=triangle*,mark size=1,thick,mark options={solid},
                error bars/.cd, y dir=both, y explicit, error bar style={solid}, error mark options={rotate=90,mark size=2,thick}] coordinates {
      (2000,1.244e+00) -= (0,1.201e+00) += (0,2.246e+01)  (5000,9.272e-01) -= (0,8.912e-01) += (0,2.208e+01)  (10000,1.094e-01) -= (0,1.033e-01) += (0,1.224e+00)  (20000,3.850e-01) -= (0,3.815e-01) += (0,1.017e+01)
    };
    \addplot[cyan,dashed,mark=triangle*,mark size=1,thick,mark options={solid},
                    error bars/.cd, y dir=both, y explicit, error bar style={solid}, error mark options={rotate=90,mark size=2,thick}] coordinates {
      (2000,1.747e+00) -= (0,1.720e+00) += (0,5.438e+01)  (5000,1.511e-01) -= (0,1.467e-01) += (0,2.714e+00)  (10000,1.539e-01) -= (0,1.510e-01) += (0,4.393e+00)  (20000,6.415e-02) -= (0,6.273e-02) += (0,6.206e-01)
    };
    \addplot[darkgreen,mark=triangle*,mark size=1,thick,mark options={solid,rotate=180},
                        error bars/.cd, y dir=both, y explicit, error bar style={solid}, error mark options={rotate=90,mark size=2,thick}] coordinates {
      (2000,9.971e-03) -= (0,5.270e-03) += (0,7.455e-03)  (5000,7.796e-03) -= (0,4.606e-03) += (0,5.584e-03)  (10000,7.680e-03) -= (0,4.318e-03) += (0,5.634e-03)  (20000,6.740e-03) -= (0,4.397e-03) += (0,8.120e-03)  (50000,6.144e-03) -= (0,3.059e-03) += (0,4.851e-03)  (100000,6.534e-03) -= (0,3.572e-03) += (0,8.317e-03)
    };
    \addplot[darkgreen,mark=triangle*,dashed,mark size=1,thick,mark options={solid,rotate=180},
                        error bars/.cd, y dir=both, y explicit, error bar style={solid}, error mark options={rotate=90,mark size=2,thick}] coordinates {
      (2000,1.551e-03) -= (0,4.925e-04) += (0,7.529e-04)  (5000,9.254e-04) -= (0,3.131e-04) += (0,4.176e-04)  (10000,6.941e-04) -= (0,2.195e-04) += (0,5.720e-04)  (20000,4.833e-04) -= (0,1.894e-04) += (0,3.019e-04)  (50000,2.983e-04) -= (0,1.363e-04) += (0,2.866e-04)  (100000,2.124e-04) -= (0,9.225e-05) += (0,2.145e-04)
    };
    \addplot[red,mark=*,mark size=1,thick,mark options={solid},
                        error bars/.cd, y dir=both, y explicit, error bar style={solid}, error mark options={rotate=90,mark size=2,thick}] coordinates {
      (2000,1.407e-04) -= (0,7.442e-05) += (0,2.063e-04)  (5000,1.403e-04) -= (0,7.346e-05) += (0,2.037e-04)  (10000,1.977e-04) -= (0,7.151e-05) += (0,9.473e-05)  (20000,3.090e-04) -= (0,1.775e-04) += (0,2.072e-04)  (50000,3.823e-04) -= (0,1.959e-04) += (0,3.136e-04)  (100000,4.517e-04) -= (0,2.554e-04) += (0,3.627e-04)
    };
    \addplot[red,dashed,mark=*,mark size=1,thick,mark options={solid},
                            error bars/.cd, y dir=both, y explicit, error bar style={solid}, error mark options={rotate=90,mark size=2,thick}] coordinates {
      (2000,9.821e-09) -= (0,7.785e-10) += (0,5.290e-10)  (5000,9.723e-09) -= (0,5.507e-10) += (0,3.984e-10)  (10000,9.433e-09) -= (0,2.075e-10) += (0,2.562e-10)  (20000,9.143e-09) -= (0,3.851e-10) += (0,4.027e-10)  (50000,9.044e-09) -= (0,4.643e-10) += (0,3.754e-10)  (100000,8.947e-09) -= (0,5.178e-10) += (0,4.367e-10)
    };
    \addplot[red,dashdotted,mark=*,mark size=1,thick,mark options={solid},
                            error bars/.cd, y dir=both, y explicit, error bar style={solid}, error mark options={rotate=90,mark size=2,thick}] coordinates {
      (2000,6.206e-14) -= (0,5.193e-14) += (0,1.490e-13)  (5000,1.325e-14) -= (0,2.904e-15) += (0,8.110e-15)  (10000,1.089e-14) -= (0,2.771e-16) += (0,2.277e-16)  (20000,1.104e-14) -= (0,2.076e-16) += (0,2.607e-16)  (50000,1.269e-14) -= (0,2.288e-16) += (0,1.727e-16)  (100000,1.478e-14) -= (0,2.981e-16) += (0,2.839e-16)
    };
    \addplot[black,mark=square*, mark size=1,thick,
                   error bars/.cd, y dir=both, y explicit, error bar style={solid,thick}, error mark options={rotate=90,mark size=2,thick}] coordinates {
      (2000,2.684e-15) -= (0,7.260e-16) += (0,1.075e-15)  (5000,3.806e-15) -= (0,9.936e-16) += (0,2.074e-15)  (10000,4.314e-15) -= (0,4.228e-16) += (0,2.376e-16)  (20000,5.559e-15) -= (0,3.773e-16) += (0,7.955e-16)  (50000,8.656e-15) -= (0,1.605e-16) += (0,1.296e-16)
    };
    \addplot[amber,mark=pentagon*,mark size=1.5,thick,mark options={solid},
                        error bars/.cd, y dir=both, y explicit, error bar style={solid}, error mark options={mark size=2,thick,rotate=90}] coordinates {
  (2000,5.630e-04) -= (0,1.024e-04) += (0,1.321e-04)  (5000,4.859e-04) -= (0,2.520e-04) += (0,2.087e-04)  (10000,4.380e-04) -= (0,2.258e-04) += (0,9.031e-05)  (20000,2.123e-04) -= (0,2.397e-05) += (0,3.487e-05)  (50000,1.807e-04) -= (0,2.623e-05) += (0,4.294e-05)  (100000,1.865e-04) -= (0,2.112e-05) += (0,1.832e-05)
    };
    \addplot[amber,dashed,mark=pentagon*,mark size=1.5,thick,mark options={solid},
                        error bars/.cd, y dir=both, y explicit, error bar style={solid}, error mark options={mark size=2,thick,rotate=90}] coordinates {
  (2000,8.128e-12) -= (0,7.813e-12) += (0,3.048e-11)  (5000,1.602e-08) -= (0,2.561e-09) += (0,4.985e-09)  (10000,1.563e-08) -= (0,6.769e-09) += (0,4.853e-09)  (20000,1.285e-08) -= (0,5.503e-09) += (0,4.879e-09)  (50000,1.207e-08) -= (0,7.057e-09) += (0,9.456e-09)  (100000,1.139e-08) -= (0,5.523e-09) += (0,6.787e-09)
    };
    \addplot[amber,dashdotted,mark=pentagon*,mark size=1.5,thick,mark options={solid},
                        error bars/.cd, y dir=both, y explicit, error bar style={solid}, error mark options={mark size=2,thick,rotate=90}] coordinates {
  (2000,1.934e-15) -= (0,5.382e-17) += (0,9.030e-17)  (5000,2.974e-15) -= (0,3.811e-16) += (0,6.375e-16)  (10000,2.756e-14) -= (0,1.616e-14) += (0,2.062e-14)  (20000,4.811e-14) -= (0,1.979e-14) += (0,1.964e-14)  (50000,6.328e-14) -= (0,2.637e-14) += (0,6.678e-14)  (100000,4.420e-14) -= (0,1.664e-14) += (0,2.159e-14)
    };
  \end{axis}
\end{tikzpicture}
}
\\[0.5em]
\begin{tikzpicture} 
    \begin{axis}[
    font=\footnotesize,
    hide axis,
    xmin=10,
    xmax=50,
    ymin=0,
    ymax=0.4,
    legend columns=3, 
    legend style={legend cell align=left}
    ]
    \addlegendimage{red,mark=*,mark size=2,thick,mark options={solid}}
    \addlegendentry{NFFT-Lanczos, setup \#1}
    \addlegendimage{red,dashed,mark=*,mark size=2,thick,mark options={solid}}
    \addlegendentry{NFFT-Lanczos, setup \#2}
    \addlegendimage{red,dashdotted,mark=*,mark size=2,thick,mark options={solid}}
    \addlegendentry{NFFT-Lanczos, setup \#3\!}
    \addlegendimage{blue,mark=triangle*,mark size=2,thick,mark options={solid}}
    \addlegendentry{Nystr\"om, $L=n/10$}
    \addlegendimage{cyan,dashed,mark=triangle*,mark size=2,thick,mark options={solid}}
    \addlegendentry{Nystr\"om, $L=n/4$}
    \addlegendimage{black,mark=square*, mark size=2,thick}
    \addlegendentry{Direct method}
    \addlegendimage{amber,mark=pentagon*,mark size=2,thick,mark options={solid}}
    \addlegendentry{FIGTree-Lanczos, $\epsilon=$5e-3}
    \addlegendimage{amber,dashed,mark=pentagon*,mark size=2,thick,mark options={solid}}
    \addlegendentry{FIGTree-Lanczos, $\epsilon=$2e-6\;}
    \addlegendimage{amber,dashdotted,mark=pentagon*,mark size=2,thick,mark options={solid}}
    \addlegendentry{FIGTree-Lanczos, $\epsilon=$1e-10}
    \addlegendimage{darkgreen,mark=triangle*,mark size=2,thick,mark options={solid,rotate=180}}
    \addlegendentry{Nystr\"om-Gaussian-NFFT, $L=20$\!\!\!\!\!\!\!\!\!\!\!\!\!\!\!\!\!}
    \addlegendimage{empty legend}
    \addlegendentry{}
    \addlegendimage{darkgreen,dashed,mark=triangle*,mark size=2,thick,mark options={solid,rotate=180}}
    \addlegendentry{Nystr\"om-Gaussian-NFFT, $L=50$}
    \end{axis}
\end{tikzpicture}
\\[0.5em]
\subfloat[Residuals for $n=20\,000$.]{\label{fig::compare_ev_residuals::residuals20000}
\begin{tikzpicture}[baseline]
  \begin{axis}[font=\footnotesize,enlargelimits=true,xmin=1,xmax=10,height=0.49\textwidth, width=0.48\textwidth, grid=major, xlabel={Eigenvalue}, ylabel={Avg. and max. residual norms},
    xtick={1,2,3,4,5,6,7,8,9,10},
    ytick={1e0,1e-3,1e-8,1e-13,1e-15},
    legend columns=3, legend style={legend cell align=left, align=left, at={(0.5,1.2)}, anchor=south},
    ymode=log,
    xmajorgrids=false,
  ]
    \addplot[blue,mark=triangle*,mark size=1,thick,mark options={solid},
                error bars/.cd, y dir=plus, y explicit, error bar style={solid}, error mark options={rotate=90,mark size=2,thick}] coordinates {
      (1,2.626e-01) -= (0,2.621e-01) += (0,1.029e+01)  (2,2.911e-02) -= (0,2.812e-02) += (0,6.997e-01)  (3,3.599e-02) -= (0,3.442e-02) += (0,6.055e-01)  (4,4.418e-02) -= (0,4.209e-02) += (0,4.399e-01)  (5,5.380e-02) -= (0,5.153e-02) += (0,2.296e-01)  (6,2.804e-02) -= (0,2.531e-02) += (0,1.649e-01)  (7,1.863e-02) -= (0,1.614e-02) += (0,1.603e-01)  (8,2.100e-02) -= (0,1.832e-02) += (0,9.000e-02)  (9,1.432e-02) -= (0,1.155e-02) += (0,7.939e-02)  (10,1.779e-02) -= (0,1.472e-02) += (0,9.989e-02)
    };
    \addplot[cyan,dashed,mark=triangle*,mark size=1,thick,mark options={solid},
                    error bars/.cd, y dir=plus, y explicit, error bar style={solid}, error mark options={rotate=90,mark size=2,thick}] coordinates {
      (1,1.142e-03) -= (0,9.320e-04) += (0,1.537e-02)  (2,3.244e-03) -= (0,2.846e-03) += (0,4.408e-02)  (3,2.922e-02) -= (0,2.866e-02) += (0,6.555e-01)  (4,2.202e-02) -= (0,2.137e-02) += (0,4.384e-01)  (5,2.698e-03) -= (0,1.854e-03) += (0,1.156e-02)  (6,1.197e-02) -= (0,1.112e-02) += (0,1.468e-01)  (7,4.205e-03) -= (0,3.288e-03) += (0,2.598e-02)  (8,1.055e-02) -= (0,9.427e-03) += (0,1.106e-01)  (9,4.403e-03) -= (0,3.376e-03) += (0,2.769e-02)  (10,5.900e-03) -= (0,4.885e-03) += (0,5.587e-02)
    };
    \addplot[darkgreen,mark=triangle*,mark size=1,thick,mark options={solid,rotate=180},
                        error bars/.cd, y dir=plus, y explicit, error bar style={solid}, error mark options={rotate=90,mark size=2,thick}] coordinates {
      (1,7.080e-04) -= (0,5.004e-04) += (0,1.391e-03)  (2,7.889e-04) -= (0,5.352e-04) += (0,1.723e-03)  (3,1.162e-03) -= (0,8.003e-04) += (0,1.691e-03)  (4,1.729e-03) -= (0,1.223e-03) += (0,6.177e-03)  (5,2.891e-03) -= (0,1.962e-03) += (0,3.228e-03)  (6,3.137e-03) -= (0,1.622e-03) += (0,3.030e-03)  (7,4.190e-03) -= (0,2.464e-03) += (0,5.262e-03)  (8,3.551e-03) -= (0,2.241e-03) += (0,2.136e-03)  (9,4.904e-03) -= (0,3.389e-03) += (0,9.956e-03)  (10,5.498e-03) -= (0,4.129e-03) += (0,6.712e-03)
    };
    \addplot[darkgreen,mark=triangle*,dashed,mark size=1,thick,mark options={solid,rotate=180},
                        error bars/.cd, y dir=plus, y explicit, error bar style={solid}, error mark options={rotate=90,mark size=2,thick}] coordinates {
      (1,4.391e-05) -= (0,2.677e-05) += (0,4.432e-05)  (2,5.100e-05) -= (0,2.611e-05) += (0,2.694e-05)  (3,7.270e-05) -= (0,3.574e-05) += (0,6.776e-05)  (4,1.182e-04) -= (0,5.930e-05) += (0,1.156e-04)  (5,1.910e-04) -= (0,9.409e-05) += (0,1.960e-04)  (6,2.855e-04) -= (0,9.572e-05) += (0,1.787e-04)  (7,2.994e-04) -= (0,1.839e-04) += (0,1.555e-04)  (8,3.343e-04) -= (0,1.420e-04) += (0,2.248e-04)  (9,3.719e-04) -= (0,1.625e-04) += (0,4.133e-04)  (10,4.262e-04) -= (0,2.003e-04) += (0,3.508e-04)
    };
    \addplot[red,mark=*,mark size=1,thick,mark options={solid},
                        error bars/.cd, y dir=plus, y explicit, error bar style={solid}, error mark options={rotate=90,mark size=2,thick}] coordinates {
      (1,6.143e-05) -= (0,1.248e-05) += (0,1.860e-05)  (2,1.421e-04) -= (0,6.452e-05) += (0,9.852e-05)  (3,1.708e-04) -= (0,7.777e-05) += (0,9.307e-05)  (4,2.159e-04) -= (0,1.295e-04) += (0,1.805e-04)  (5,3.090e-04) -= (0,1.775e-04) += (0,2.072e-04)  (6,1.042e-04) -= (0,5.125e-05) += (0,6.108e-05)  (7,8.455e-05) -= (0,3.676e-05) += (0,5.087e-05)  (8,1.071e-04) -= (0,5.168e-05) += (0,5.110e-05)  (9,1.149e-04) -= (0,5.977e-05) += (0,8.667e-05)  (10,1.567e-04) -= (0,7.537e-05) += (0,8.535e-05)
    };
    \addplot[red,dashed,mark=*,mark size=1,thick,mark options={solid},
                            error bars/.cd, y dir=plus, y explicit, error bar style={solid}, error mark options={rotate=90,mark size=2,thick}] coordinates {
      (1,9.143e-09) -= (0,3.851e-10) += (0,4.027e-10)  (2,5.792e-09) -= (0,3.275e-10) += (0,3.962e-10)  (3,4.527e-09) -= (0,6.987e-11) += (0,4.493e-11)  (4,3.602e-09) -= (0,1.569e-10) += (0,1.166e-10)  (5,2.688e-09) -= (0,1.602e-10) += (0,1.145e-10)  (6,2.720e-09) -= (0,1.834e-10) += (0,1.109e-10)  (7,2.412e-09) -= (0,9.143e-11) += (0,4.053e-11)  (8,2.448e-09) -= (0,8.777e-11) += (0,1.198e-10)  (9,2.469e-09) -= (0,1.293e-10) += (0,1.822e-10)  (10,1.852e-09) -= (0,5.510e-11) += (0,2.625e-11)
    };
    \addplot[red,dashdotted,mark=*,mark size=1,thick,mark options={solid},
                            error bars/.cd, y dir=plus, y explicit, error bar style={solid}, error mark options={rotate=90,mark size=2,thick}] coordinates {
      (1,1.104e-14) -= (0,2.076e-16) += (0,2.607e-16)  (2,8.840e-15) -= (0,4.532e-16) += (0,6.558e-16)  (3,5.359e-15) -= (0,8.950e-17) += (0,1.174e-16)  (4,4.023e-15) -= (0,1.414e-16) += (0,2.162e-16)  (5,2.489e-15) -= (0,1.842e-16) += (0,9.978e-17)  (6,2.178e-15) -= (0,6.356e-17) += (0,1.005e-16)  (7,2.006e-15) -= (0,9.739e-17) += (0,2.128e-16)  (8,2.022e-15) -= (0,8.227e-17) += (0,1.043e-16)  (9,2.029e-15) -= (0,9.337e-17) += (0,1.570e-16)  (10,1.779e-15) -= (0,1.914e-16) += (0,2.195e-16)
    };
    \addplot[amber,mark=pentagon*,mark size=1.5,thick,mark options={solid},
                        error bars/.cd, y dir=plus, y explicit, error bar style={solid}, error mark options={rotate=90,mark size=2,thick}] coordinates {
  (1,5.516e-05) -= (0,6.864e-06) += (0,7.896e-06)  (2,2.123e-04) -= (0,2.397e-05) += (0,3.487e-05)  (3,8.401e-05) -= (0,1.825e-05) += (0,9.821e-06)  (4,9.035e-05) -= (0,1.162e-05) += (0,1.773e-05)  (5,6.516e-05) -= (0,1.023e-05) += (0,1.583e-05)  (6,4.063e-05) -= (0,3.618e-06) += (0,5.627e-06)  (7,3.480e-05) -= (0,7.373e-06) += (0,8.300e-06)  (8,4.158e-05) -= (0,8.024e-06) += (0,4.486e-06)  (9,4.182e-05) -= (0,9.258e-06) += (0,8.238e-06)  (10,4.109e-05) -= (0,6.515e-06) += (0,8.178e-06)
    };
    \addplot[amber,dashed,mark=pentagon*,mark size=1.5,thick,mark options={solid},
                        error bars/.cd, y dir=plus, y explicit, error bar style={solid}, error mark options={rotate=90,mark size=2,thick}] coordinates {
  (1,7.987e-09) -= (0,4.385e-09) += (0,3.671e-09)  (2,1.261e-08) -= (0,5.261e-09) += (0,5.120e-09)  (3,7.745e-09) -= (0,2.939e-09) += (0,4.854e-09)  (4,6.642e-09) -= (0,2.141e-09) += (0,3.746e-09)  (5,3.103e-09) -= (0,1.446e-09) += (0,1.014e-09)  (6,3.433e-09) -= (0,8.132e-10) += (0,8.489e-10)  (7,2.621e-09) -= (0,1.233e-09) += (0,1.075e-09)  (8,3.468e-09) -= (0,7.676e-10) += (0,1.427e-09)  (9,2.681e-09) -= (0,2.083e-10) += (0,2.577e-10)  (10,2.379e-09) -= (0,1.171e-09) += (0,9.407e-10)
    };
    \addplot[amber,dashdotted,mark=pentagon*,mark size=1.5,thick,mark options={solid},
                        error bars/.cd, y dir=plus, y explicit, error bar style={solid}, error mark options={rotate=90,mark size=2,thick}] coordinates {
  (1,3.096e-14) -= (0,1.619e-14) += (0,1.710e-14)  (2,4.489e-14) -= (0,1.658e-14) += (0,2.285e-14)  (3,3.670e-14) -= (0,2.529e-14) += (0,2.795e-14)  (4,2.810e-14) -= (0,4.396e-15) += (0,6.218e-15)  (5,2.355e-14) -= (0,1.302e-14) += (0,2.130e-14)  (6,2.511e-14) -= (0,1.210e-14) += (0,1.121e-14)  (7,2.221e-14) -= (0,8.693e-15) += (0,1.960e-14)  (8,2.783e-14) -= (0,1.635e-14) += (0,1.261e-14)  (9,2.302e-14) -= (0,1.112e-14) += (0,1.191e-14)  (10,1.981e-14) -= (0,1.228e-14) += (0,1.862e-14)
    };
  \end{axis}
\end{tikzpicture}
}
\subfloat[Comparison of runtimes.]{\label{fig::compare_ev_residuals::runtimes}
\begin{tikzpicture}[baseline]
  \begin{axis}[font=\footnotesize,enlargelimits=true,xmin=2000,xmax=100000,height=0.49\textwidth, width=0.48\textwidth, grid=major, xlabel={$n$}, ylabel={Avg. and max. runtimes (s)},
    xtick={2000,5000,20000,100000},xticklabel={\pgfkeys{/pgf/fpu=true}\pgfmathparse{exp(\tick)}\pgfmathprintnumber[1000 sep={\,},fixed relative, precision=3]{\pgfmathresult}\pgfkeys{/pgf/fpu=false}},
    legend style={legend cell align=left, align=left, at={(0.4,1.2)}, anchor=south},
    xmode=log,ymode=log,
    xmajorgrids=false,
  ]
    \addplot[black,mark=square*, mark size=1,thick,
    error bars/.cd, y dir=plus, y explicit, error bar style={solid,thick}, error mark options={rotate=90,mark size=2,thick}] coordinates {
      (2000,9.219e+00) -= (0,1.317e-01) += (0,1.115e-01)  (5000,5.444e+01) -= (0,3.306e-01) += (0,2.322e-01)  (10000,2.153e+02) -= (0,2.014e+00) += (0,5.086e+00)  (20000,8.546e+02) -= (0,8.293e+01) += (0,6.135e+01)  (50000,5.401e+03) -= (0,6.097e+02) += (0,2.828e+02)
    };
    \addplot[blue,mark=triangle*,mark size=1,thick,mark options={solid},
                error bars/.cd, y dir=plus, y explicit, error bar style={solid}, error mark options={rotate=90,mark size=2,thick}] coordinates {
      (2000,1.451e-01) -= (0,1.905e-02) += (0,1.106e-01)  (5000,1.530e+00) -= (0,1.433e-01) += (0,1.601e+00)  (10000,9.546e+00) -= (0,4.132e-01) += (0,1.306e+01)  (20000,7.043e+01) -= (0,5.209e+00) += (0,9.997e+01)
    };
    \addplot[cyan,dashed,mark=triangle*,mark size=1,thick,mark options={solid},
                error bars/.cd, y dir=plus, y explicit, error bar style={solid}, error mark options={rotate=90,mark size=2,thick}] coordinates {
      (2000,8.624e-01) -= (0,7.343e-02) += (0,6.253e-01)  (5000,1.030e+01) -= (0,4.034e-01) += (0,1.020e+01)  (10000,7.713e+01) -= (0,3.081e+00) += (0,7.713e+01)  (20000,6.059e+02) -= (0,2.100e+01) += (0,5.992e+02)
    };
    \addplot[darkgreen,mark=triangle*,mark size=1,thick,mark options={solid,rotate=180},
                        error bars/.cd, y dir=plus, y explicit, error bar style={solid}, error mark options={rotate=90,mark size=2,thick}] coordinates {
      (2000,1.211e+00) -= (0,1.657e-02) += (0,8.758e-02)  (5000,2.136e+00) -= (0,3.715e-02) += (0,1.383e-01)  (10000,3.620e+00) -= (0,3.547e-02) += (0,1.225e-01)  (20000,7.075e+00) -= (0,7.052e-02) += (0,2.623e-01)  (50000,1.876e+01) -= (0,3.657e-01) += (0,2.645e-01)  (100000,3.406e+01) -= (0,7.541e-01) += (0,1.100e+00)
    };
    \addplot[darkgreen,mark=triangle*,dashed,mark size=1,thick,mark options={solid,rotate=180},
                        error bars/.cd, y dir=plus, y explicit, error bar style={solid}, error mark options={rotate=90,mark size=2,thick}] coordinates {
      (2000,2.960e+00) -= (0,4.079e-02) += (0,1.727e-01)  (5000,5.198e+00) -= (0,5.209e-02) += (0,3.095e-01)  (10000,9.430e+00) -= (0,6.368e-01) += (0,1.085e+00)  (20000,1.731e+01) -= (0,6.849e-01) += (0,4.203e-01)  (50000,4.532e+01) -= (0,2.223e+00) += (0,9.813e-01)  (100000,8.330e+01) -= (0,1.772e+00) += (0,1.398e+00)
    };
    \addplot[red,mark=*,mark size=1,thick,
        error bars/.cd, y dir=plus, y explicit, error bar style={solid}, error mark options={rotate=90,mark size=2,thick}] coordinates {
      (2000,2.117e-01) -= (0,6.338e-02) += (0,2.269e-01)  (5000,3.347e-01) -= (0,4.916e-03) += (0,1.156e-02)  (10000,6.563e-01) -= (0,2.155e-03) += (0,1.388e-03)  (20000,1.367e+00) -= (0,8.483e-02) += (0,2.189e-01)  (50000,3.263e+00) -= (0,3.036e-02) += (0,1.059e-01)  (100000,6.670e+00) -= (0,1.149e-01) += (0,1.777e-01)
    };
    \addplot[red,dashed,mark=*,mark size=1,thick,mark options={solid},
            error bars/.cd, y dir=plus, y explicit, error bar style={solid}, error mark options={rotate=90,mark size=2,thick}] coordinates {
      (2000,1.079e+00) -= (0,6.326e-02) += (0,7.346e-02)  (5000,1.797e+00) -= (0,9.392e-03) += (0,1.460e-02)  (10000,3.080e+00) -= (0,4.397e-02) += (0,2.475e-02)  (20000,5.507e+00) -= (0,2.892e-01) += (0,8.450e-01)  (50000,1.261e+01) -= (0,2.458e-01) += (0,3.033e-01)  (100000,2.454e+01) -= (0,2.906e-01) += (0,1.879e-01)
    };
    \addplot[red,dashdotted,mark=*,mark size=1,thick,mark options={solid},
            error bars/.cd, y dir=plus, y explicit, error bar style={solid}, error mark options={rotate=90,mark size=2,thick}] coordinates {
      (2000,1.071e+01) -= (0,2.142e-01) += (0,2.770e-01)  (5000,1.431e+01) -= (0,3.389e-01) += (0,1.863e-01)  (10000,2.098e+01) -= (0,3.538e-01) += (0,1.640e-01)  (20000,3.412e+01) -= (0,3.669e+00) += (0,1.209e+00)  (50000,7.220e+01) -= (0,6.951e+00) += (0,2.669e+00)  (100000,1.417e+02) -= (0,2.783e-01) += (0,2.127e-01)
    };
    \addplot[amber,mark=pentagon*,mark size=1.5,thick,mark options={solid},
                        error bars/.cd, y dir=both, y explicit, error bar style={solid}, error mark options={mark size=2,thick,rotate=90}] coordinates {
  (2000,9.505e-01) -= (0,1.588e-01) += (0,1.653e-01)  (5000,2.620e+00) -= (0,1.324e-01) += (0,1.520e-01)  (10000,5.391e+00) -= (0,6.486e-01) += (0,8.148e-01)  (20000,8.694e+00) -= (0,4.311e-01) += (0,4.392e-01)  (50000,2.408e+01) -= (0,9.044e-01) += (0,5.221e-01)  (100000,5.281e+01) -= (0,6.783e-01) += (0,9.232e-01)
    };
    \addplot[amber,dashed,mark=pentagon*,mark size=1.5,thick,mark options={solid},
                        error bars/.cd, y dir=both, y explicit, error bar style={solid}, error mark options={mark size=2,thick,rotate=90}] coordinates {
  (2000,6.491e+00) -= (0,2.549e-02) += (0,2.603e-02)  (5000,9.759e+00) -= (0,2.694e-01) += (0,3.160e-01)  (10000,2.193e+01) -= (0,7.984e-01) += (0,1.252e+00)  (20000,5.015e+01) -= (0,5.479e-01) += (0,5.336e-01)  (50000,1.400e+02) -= (0,5.938e+00) += (0,7.614e+00)  (100000,3.088e+02) -= (0,1.491e+01) += (0,5.426e+00)
    };
    \addplot[amber,dashdotted,mark=pentagon*,mark size=1.5,thick,mark options={solid},
                        error bars/.cd, y dir=both, y explicit, error bar style={solid}, error mark options={mark size=2,thick,rotate=90}] coordinates {
  (2000,7.792e+00) -= (0,1.039e-01) += (0,2.399e-01)  (5000,4.741e+01) -= (0,2.227e-01) += (0,3.143e-01)  (10000,1.309e+02) -= (0,4.503e+01) += (0,4.248e+01)  (20000,1.356e+02) -= (0,1.381e+01) += (0,9.738e+00)  (50000,4.045e+02) -= (0,4.187e+01) += (0,1.482e+01)  (100000,9.077e+02) -= (0,1.239e+01) += (0,1.602e+01)
    };
    \addplot[black,dotted,very thick,domain=10000:30000,samples=5] {3e-10*x^3};
    \node[pin=135:$\sim n^3$] at (axis cs:17e3,1e3) {};
    \addplot[black,dotted,very thick,domain=25000:90000,samples=5] {1.2e-6*x^2};
    \node[pin=10:$\sim n^2$] at (axis cs:4e4,1.5e3) {};
    \addplot[black,dotted,very thick,domain=8000:110000,samples=5] {4e-5*x};
    \node[pin=315:$\sim n$] at (axis cs:2.12e4,1.03e0) {};
  \end{axis}
\end{tikzpicture}
}
}
\caption{Comparison of accuracies and runtimes for spiral data sets. %
}\label{fig:compare_ev_residuals}
\end{figure}
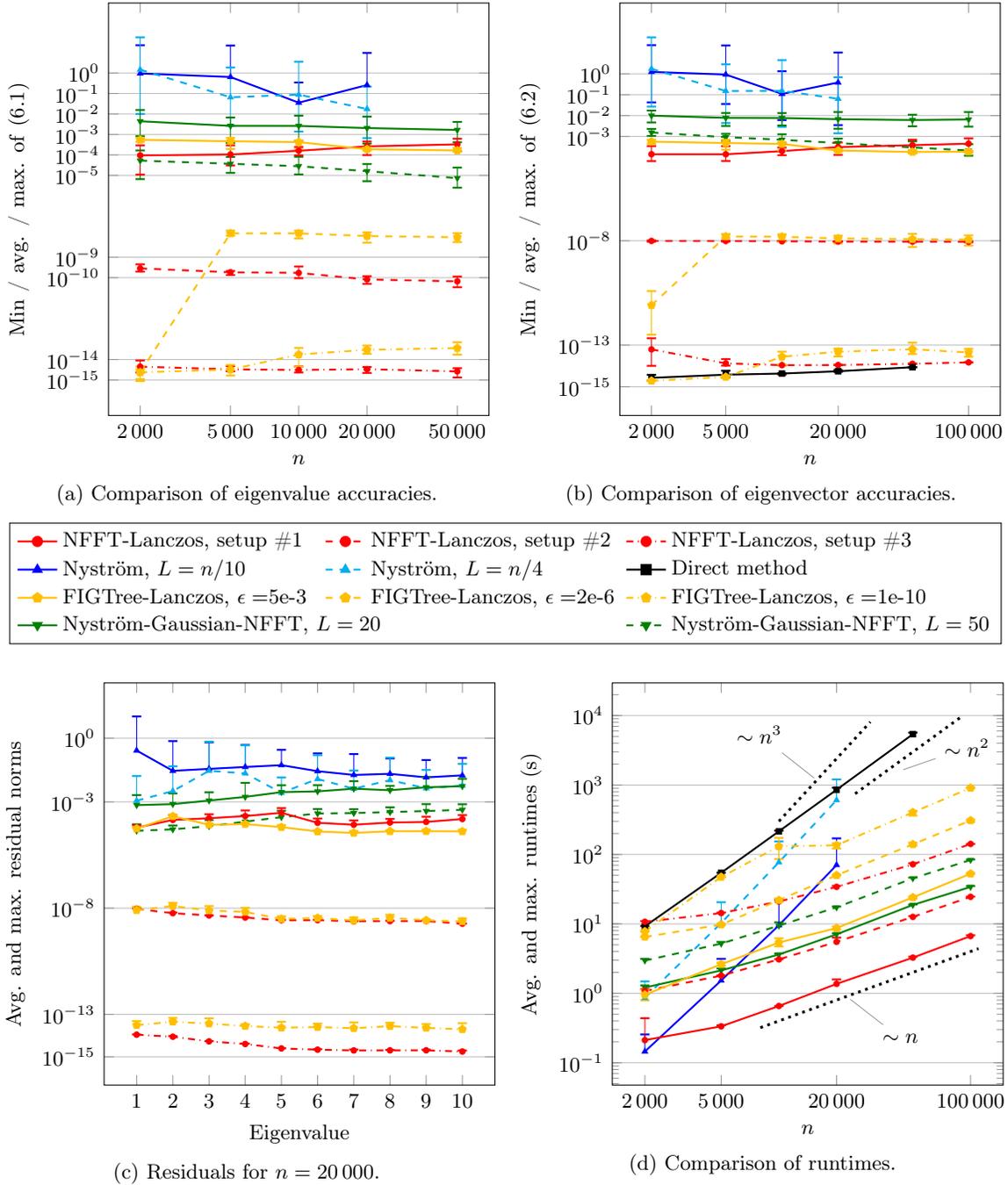

In Figure~\ref{fig:compare_ev_residuals}, we visualize the results of the test runs.
We show the minimum, average and maximum of the maximum eigenvalue errors in Figure~\ref{fig::compare_ev_residuals::eigenvalues}. For this, we first determine the maximum eigenvalue errors
\begin{equation}\label{equ::max_ev_error}
\max_{j=1,\ldots,10} \left| \lambda_j - \lambda_j^{\textnormal{(direct)}} \right|
\end{equation}
for each test run, where $\lambda_j$ denotes the $j$-th eigenvalue computed by the method under consideration and $\lambda_j^{\textnormal{(direct)}}$ the one computed by a direct method using full matrix-vector products with the matrix $\Aa$. Then, for fixed total number of data points~$n$ and fixed parameter setup, we compute the minimum, average and maximum of~\eqref{equ::max_ev_error}, where the minimum, average and maximum are computed using 5 instances of~\eqref{equ::max_ev_error} for the NFFT-based Lanczos method and $5\cdot 10$ instances of~\eqref{equ::max_ev_error} for the Nystr\"om-based methods.
We observe that the averages of the maximum eigenvalue errors~\eqref{equ::max_ev_error} are above $10^{-2}$ for the two considered parameter choices of the Nystr\"om method from Section~\ref{sec::nystrom::traditional}, even when the rank $L$ is chosen as a quarter of the matrix size~$n$. Moreover, the minima and maxima of~\eqref{equ::max_ev_error} differ distinctly from the averages. In particular, the accuracies may vary strongly across different Nystr\"om runs on an identical data set.
For the NFFT-based Lanczos method, each minimum, average and maximum of the maximum eigenvalue errors~\eqref{equ::max_ev_error} only differs slightly from one another. The maximum eigenvalue errors~\eqref{equ::max_ev_error} are around $10^{-4}$ to $10^{-3}$ for parameter setup \#1, around $10^{-10}$ to $10^{-9}$ for setup \#2, and below $10^{-14}$ for setup \#3.
For the hybrid Nystr\"om-Gaussian-NFFT method, which internally uses $2L$ many NFFT-based fast summations with parameter setup \#2, the maximum eigenvalue errors~\eqref{equ::max_ev_error} are around $10^{-3}$ to $10^{-2}$ for parameter $L=20$ and around $10^{-5}$ to $10^{-4}$ for $L=50$. This means that the observed maximum eigenvalue errors~\eqref{equ::max_ev_error} are distinctly smaller compared to the ones of the traditional Nystr\"om method,
and the errors for parameter $L=50$ are slightly smaller than the ones of the NFFT-based Lanczos method with parameter setup \#1.

In Figure~\ref{fig::compare_ev_residuals::residuals}, we depict the minimum, average and maximum of the maximum residual norms~\eqref{equ::max_res_norm} for each total number of data points~$n$. We compute these numbers by first determining the maximum residual norms
\begin{equation}\label{equ::max_res_norm}
\max_{j=1,\ldots,10} \norm{ \Aa \mathbf{v}_j - \lambda_j \mathbf{v}_j }_2
\end{equation}
for each test run,
where $\lambda_j$ denotes the $j$-th eigenvalue of $\Aa$ and $\mathbf{v}_j$ the corresponding eigenvector. Then, for fixed $n$ and fixed parameter setup, we compute the minimum, average and maximum of~\eqref{equ::max_res_norm}.
We observe that the averages of the maximum residual norms~\eqref{equ::max_res_norm} are above $10^{-1}$ for the considered parameter choices of the Nystr\"om method, even when the rank~$L$ is chosen as a quarter of the matrix size~$n$. Moreover, the minima and maxima of the maximum residual norms~\eqref{equ::max_res_norm} differ distinctly from the averages. Especially, the accuracies may vary strongly across different Nystr\"om runs on an identical data set.
For the NFFT-based Lanczos method, each minimum, average and maximum of~\eqref{equ::max_res_norm} only differs slightly from one another. The maximum residual norms~\eqref{equ::max_res_norm} are around $10^{-4}$ to $10^{-3}$ for parameter setup \#1, around $10^{-8}$ for setup~\#2, and around $10^{-15}$ to $10^{-13}$ for setup \#3.
For the hybrid Nystr\"om-Gaussian-NFFT method, maximum residual norms~\eqref{equ::max_res_norm} are around $10^{-2}$ for parameter $L=20$ and around $10^{-4}$ to $10^{-3}$ for $L=50$. In the latter case, the errors are slightly larger than the ones of the NFFT-based Lanczos method with parameter setup \#1 for $n\in\{2\,000, 5\,000, 10\,000, 20\,000\}$ data points and slightly smaller for $n\in\{50\,000, 100\,000\}$.

Additionally, in Figure~\ref{fig::compare_ev_residuals::residuals20000}, we investigate the average and maximum of the maximum residual norms~\eqref{equ::max_res_norm} for each fixed eigenvalue $\lambda_j$ for $n=20\,000$ data points. For Nystr\"om $L=n/10$, we observe that the residual norms belonging to the first eigenvalue are distinctly larger than for the remaining eigenvalues.
In general, the observed maximal residual norms~\eqref{equ::max_res_norm} vary similarly for each eigenvalue.
For the NFFT-based Lanczos method with parameter setup \#2 and \#3, the maximum residual norms~\eqref{equ::max_res_norm} of the tail eigenvalues are slightly smaller than of the leading eigenvalues,
which is not the case for the parameter setup \#1 as well as for the results of the hybrid Nystr\"om-Gaussian-NFFT method.

In Figure~\ref{fig::compare_ev_residuals::runtimes}, we show the average and maximum runtimes of the different methods and parameter choices in dependence of the total number of data points~$n$. The runtimes were determined on a computer with Intel Core i7 CPU 970 (3.20~GHz) using one thread.
We remark that the NFFT supports OpenMP, cf.\ \cite{Vo12report}, but we restricted all time measurements to 1 thread for better comparison.
We observe that the runtimes of the traditional Nystr\"om method grow approximately like $\sim n^3$, and the runtimes of the direct computation method for the eigenvalues grow approximately like $\sim n^2$.
Moreover, the slopes of the runtime graphs of the NFFT-based Lanczos method are distinctly smaller and the runtimes grow approximately like $\sim n$. Depending on the parameter choices, the NFFT-based Lanczos method is faster than the Nystr\"om method once the total number of data points $n$ is above 2\,000 -- 10\,000.
The hybrid Nystr\"om-Gaussian-NFFT method with parameter $L=20$ is slightly slower than the NFFT-based Lanczos method with setup~\#2. For the parameter $L=50$ the method is slower by a factor of approximately 2.5. In both cases, the runtimes grow approximately like $\sim n$.
The runtimes of the direct method were the highest ones in most cases. For the tests, we precomputed the diagonal entries of the matrix $\Dd^{-1/2}$ but we computed the entries of the weight matrix~$\Ww$ again for each matrix-vector multiplication with the matrix~$\Aa$. Alternatively, one could store the whole matrix~$\Aa\in\mathbb{R}^{n\times n}$ for small problem sizes~$n$ and this would have reduced the runtimes of the direct method to 1/20. However, then we would have to store at least $n(n-1)/2$ values, which would already require about 10 GB RAM for $n=50\,000$ and double precision.

For comparison, we also applied the FIGTree method from~\cite{morariu08figtree} to our testcases, and we denote the obtained results by ``FIGTree-Lanczos'' in Figure~\ref{fig:compare_ev_residuals}. The FIGTree accuracy parameter $\epsilon$ was chosen $\in\{5\cdot10^{-3},2\cdot10^{-6},10^{-10}\}$ such that the resulting residual norms~\eqref{equ::max_res_norm} in Figure~\ref{fig::compare_ev_residuals::residuals} approximately match those of the NFFT-based Lanczos method for setup~\#1,\#2,\#3. We observe that the obtained eigenvalue accuracies in Figure~\ref{fig::compare_ev_residuals::eigenvalues} are similar for $\epsilon=5\cdot10^{-3}$ and $10^{-10}$ to the ones of the NFFT-based Lanczos method for setup~\#1 and \#3, respectively. For $n\geq5\,000$ data points and FIGTree accuracy parameter $\epsilon=2\cdot10^{-6}$, we observe for our testcase that the obtained eigenvalue accuracies are lower by about two order of magnitudes compared to the NFFT-based Lanczos method with setup~\#2. When looking at the obtained runtimes, we observe that ``FIGTree-Lanczos'' requires approximately 4 times to 7 times the runtime of the corresponding NFFT-based Lanczos method with comparable eigenvector accuracy in most cases.

\subsection{Applications}
\label{sec::res::app}
In the following, we will showcase the effect of the improved accuracy on popular data science methods that utilize the graph Laplacian matrix. We will compare how the methods perform if the eigenvectors are computed with the NFFT-based Lanczos method or the traditional Nystr\"om extension.

\subsubsection{Spectral clustering}
\label{sec::res::app::clustering}

Spectral clustering is an increasingly popular technique \cite{VLu07} and we briefly illustrate the method proposed in \cite{ng2002spectral}. The basis of their algorithm is a truncated eigenapproximation 
$
\mathbf{V}_k\mathbf{D}_k\mathbf{V}_k^{T}
$
with $\mathbf{V}_k\in\R^{n\times k},$ which is an approximation based on the smallest eigenvalues and eigenvectors of the graph Laplacian. Now the rows of $\mathbf{V}_k$ are normalized to obtain a matrix $\mathbf{Y}_k$. 
The normalized rows are then divided into a fixed number of disjoint clusters by a standard k-means algorithm.

Here, we apply spectral clustering to an image segmentation problem. The original image of size $533\times 800$ is depicted in Figure~\ref{fig::RGBkmeans::original}. We construct a graph Laplacian where each pixel corresponds to a node in the graph and the distance measure is the distance between the values in all three color channels, such that each vertex $\mathbf{v}_j\in\{0,1,\dots,255\}^3$. Correspondingly, the graph Laplacian would be a dense matrix of size 426\,400 $\times$ 426\,400. We set the scaling parameter $\sigma=90$.
Figure~\ref{fig::image::ev} shows the first ten eigenvalues of the matrix~$\Aa$.

For obtaining reference results, we use the Matlab function \texttt{eigs} on the full matrix $\Aa$ computing 4 eigenvectors and this required more than 31 hours using up to 32 threads on a computer with Intel Xeon E7-4880 CPUs (2.50~GHz), using more than 500 CPU hours in total. Next, we applied the NFFT-based Lanczos method from Section~\ref{sec::lanczos} with parameters $N=16$, $m=2$, $p=2$ and $\varepsilon_\mathrm{B}=1/8$ for the eigenvector computations. We show the results in Figure~\ref{fig::RGBkmeans::nfft2} and~\ref{fig::RGBkmeans::nfft4} for $k=2$ and $k=4$ classes, respectively. The segmented images look satisfactory. The main features of the image are preserved and large areas of similar color are correctly assigned to the same cluster, while there are only small ``noisy'' areas. Compared to the segmented image from the direct computations, we have approximately 0.1~\% differences (467 out of 426,400) in the class assigments in the case of $k=4$ classes. 
For the runtimes, we measure approximately 25 seconds for the NFFT-based Lanczos method and 18 seconds for the k-means algorithm on a computer with Intel Core i7 CPU 970 (3.20~GHz) using one thread.

Additionally, we ran the Nystr\"om method 100 times with parameter $L=250$. Here the runtimes were approximately 60 seconds on average without the runtime for the clustering. We applied the k-means algorithm for $k=4$ classes, which required approximately 22 seconds on average. We observed that in 79 of the 100 test runs of Nystr\"om followed by k-means, the images appear to be very close to the ones obtained when applying \texttt{eigs} on the full matrix $\Aa$, i.e., the differences are less than 2\,\%. In Figure~\ref{fig::RGBkmeans::nystrom4_run2}, we visualize the results of a corresponding test run. 
However, in 13 of the 100 test runs, the Nystr\"om method returned eigenvectors which caused segmentation differences of more than 20\,\% with such ``noisy'' images that we consider these as ``failed'' runs. See Figure~\ref{fig::RGBkmeans::nystrom4_run59_failed} for one example with approximately 25\,\% differences.
The differences between Figure~\ref{fig::RGBkmeans::nfft4} and~\ref{fig::RGBkmeans::nystrom4_run59_failed} are shown as a black and white picture in Figure~\ref{fig::RGBkmeans::nystrom4_run59_failed_diff}.

\begin{figure}[htb!]
	\centering{
		\begin{tikzpicture}[baseline]
		\begin{axis}[
			font=\footnotesize,
			enlarge x limits=true,
			enlarge y limits=true,
			height=0.4\textwidth,
			grid=major,
			width=0.7\textwidth,
			xmin=1,xmax=10,
			ymin=0.05,ymax=1,
			xlabel={Eigenvalue number},
			ylabel={Eigenvalue},
			legend style={legend cell align=left}, legend pos=south east,
			legend columns = -1,
		]
		
		\addplot[black,mark=*,mark size=2,only marks,mark options={solid}] coordinates {
(1,1.000000) (2,0.888647) (3,0.574924) (4,0.532851) (5,0.323013) (6,0.174247) (7,0.137304) (8,0.071544) (9,0.067258) (10,0.042388)
		};
		\end{axis}
		\end{tikzpicture}
	}
	\caption{First ten eigenvalues of $\Aa$ using Gaussian weights and scaling parameter $\sigma=90$ for Figure~\ref{fig::RGBkmeans::original}.
	}\label{fig::image::ev}
\end{figure}
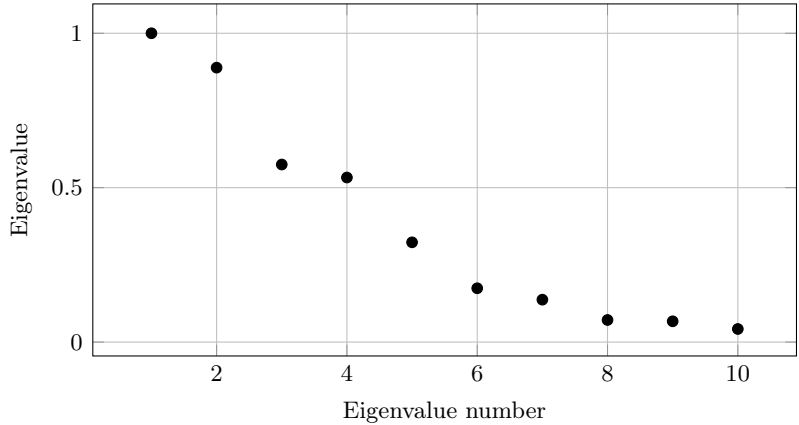

Moreover, we tested increasing the parameter $L$ to $500$. Then, the run times increased to approximately 152 seconds on average. When applying the k-means algorithm to the obtained eigenvectors, the results improved. The differences compared to the reference image segmentation are less than 2\,\% in 85 of the 100 test runs and larger than 20\,\% in 9 test runs.

\begin{figure}[htb!]
\centering{
\subfloat[Original image\protect\footnotemark]{\label{fig::RGBkmeans::original}
\includegraphics[width=7cm]{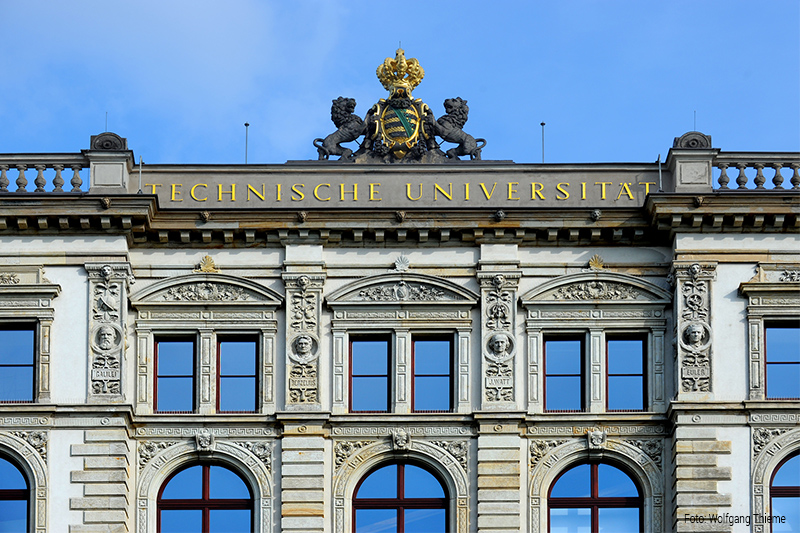}
}
\hfill
\subfloat[$k=2$ classes, NFFT-Lanczos]{\label{fig::RGBkmeans::nfft2}
\includegraphics[width=7cm]{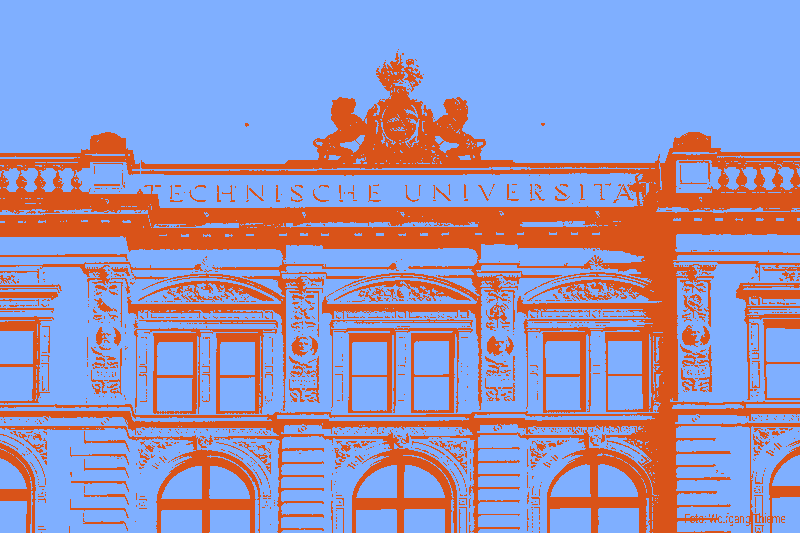}
}
\\
\subfloat[$k=4$ classes, NFFT-Lanczos]{\label{fig::RGBkmeans::nfft4}
\includegraphics[width=7cm]{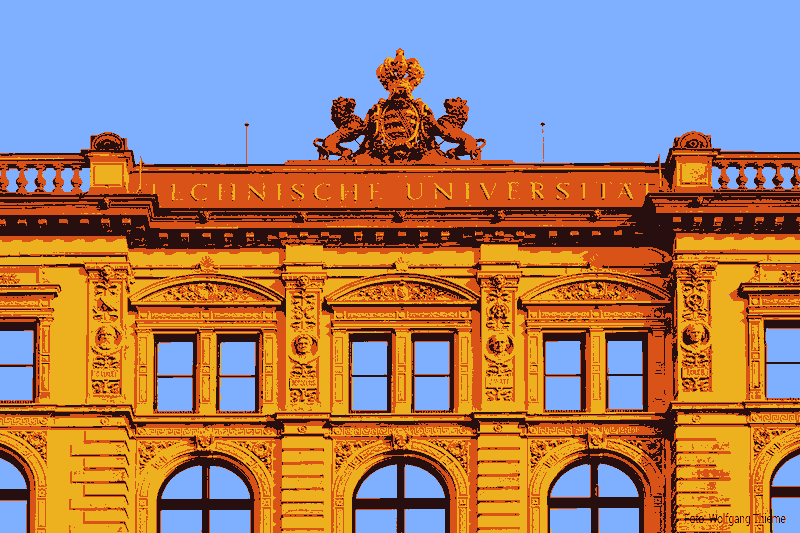}
}
\hfill
\subfloat[$k=4$ classes, Nystr\"om]{\label{fig::RGBkmeans::nystrom4_run2}
\includegraphics[width=6cm]{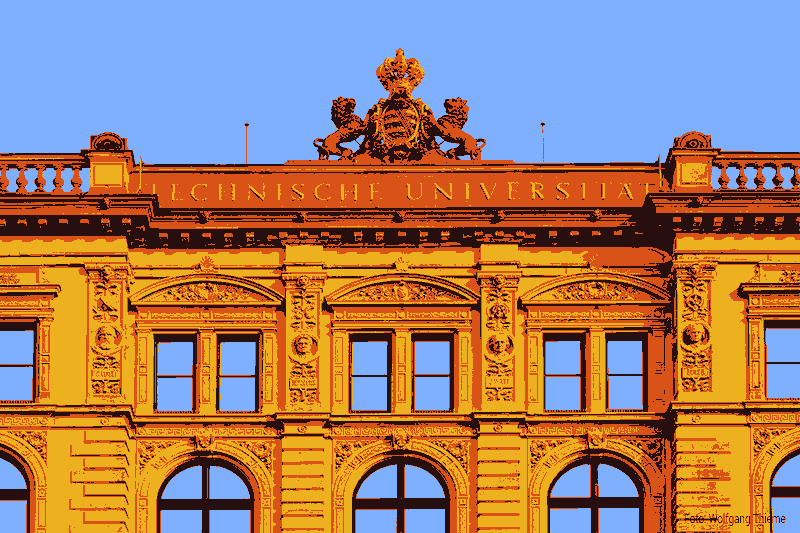}
}
\\
\subfloat[$k=4$ classes, Nystr\"om (``failed'' run)]{\label{fig::RGBkmeans::nystrom4_run59_failed}
\includegraphics[width=7cm]{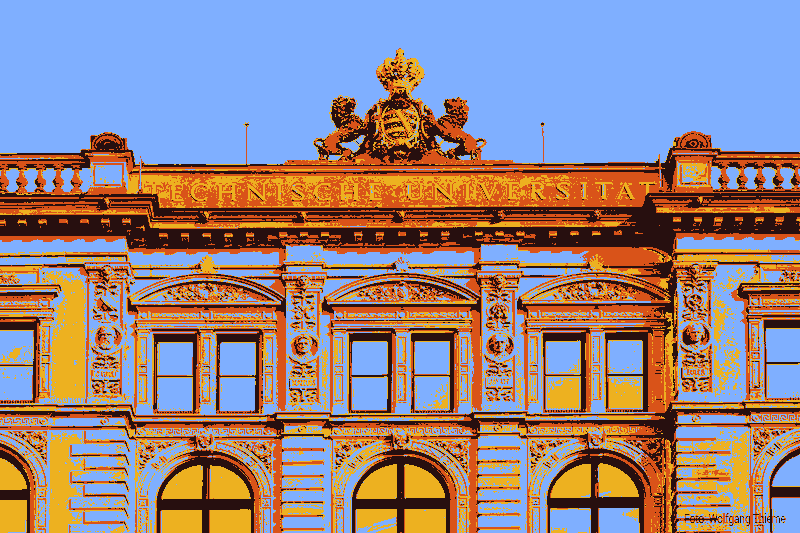}
}
\hfill
\subfloat[differences between (c) and (e)]{\label{fig::RGBkmeans::nystrom4_run59_failed_diff}
\includegraphics[width=7cm]{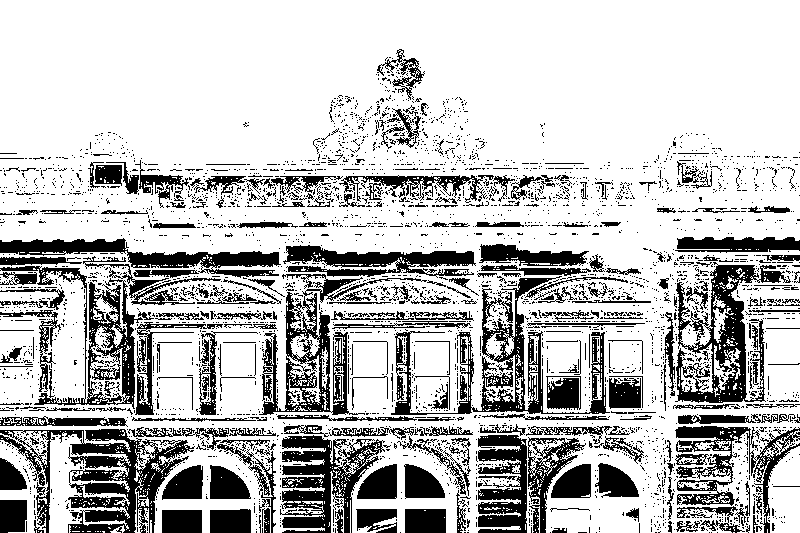}
}
}
\caption{Results of image segmentation ($533\times 800=426\,400$ pixels) via spectral clustering and k-means using the NFFT-based Lanczos method from Section~\ref{sec::lanczos} and the Nystr\"om method from Section~\ref{sec::nystrom::traditional}. ''Failed run'' in Subfigure~(e) means segmentation differences of more than 20\,\% compared to the results obtained when applying \texttt{eigs} on the full matrix $\Aa$.\label{fig::RGBkmeans}}%
\end{figure}

\subsubsection{Semi-supervised learning by a phase field method}
\label{sec::res::app::pf}

We here want to state an exemplary method that relies heavily on a number of eigenvectors of the graph Laplacian. It was proposed by Bertozzi and Flenner \cite{BerF12} and corresponds to a semi-supervised learning (SSL) problem. 
Suppose we have a graph-based dataset as before where each vertex is assigned to one of $C$ classes. A training set of $s$ random sample vertices from each class is set up. For the case of $C=2$ classes, a training vector $\mathbf f \in \R^n$ is set up with entries $-1$ for training nodes from one class, $1$ for training nodes from the other class, and $0$ for nodes that do not belong to the training data. The task of SSL is to use $\mathbf f$ to find a classification vector $\bu \in \R^n$. The sign of its entries is then used to predict each node's assigned class.

One successful approach computes $\bu$ as the end point of the trajectory described by the Allen--Cahn equation
\[ \bu: [0,\infty) \to \R^n, \qquad \bu_t=-\eps \Ll_s \bu - \frac{1}{\eps} \psi'(\bu) + \mathbf{\Omega} (\mathbf{f}-\bu), \qquad \bu(0) = \mathbf{f} \]
(see \cite{van2014mean,luo2017convergence} for details). 
Here $\psi(u)=(u^2-1)^2$ is the double-well potential, which we understand to be applied component-wise, and $\mathbf{\Omega}$ denotes a diagonal matrix with entries $\Omega_{ii} = \omega_0 > 0$ if vertex $i$ belongs to the training data and $\Omega_{ii} = 0$ otherwise.
To discretize this ODE we will not introduce an index for the temporal discretization but rather assume that all values $\bu$ are evaluated at the new time-point whereas $\bar{\bu}$ indicates the previous time-point.  We then obtain 
\begin{equation*}
\frac{\bu-\bar{\bu}}{\tau}+\eps \Ll_s \bu+c\bu =-\frac{1}{\eps}\psi'(\bar{\bu})+c\bar{\bu}+\mathbf{\Omega}(\mathbf{f}-\bar{\bu}),
\end{equation*}
where $\bu$ is a vector defined on the graph on which we base the final classification decision. Here, $c>0$ is a positive parameter for the convexity splitting technique \cite{BerF12}.
For a more detailed discussion of how to set these parameters we refer to \cite{BerF12,BosKS17}.
We now use the $k$ computed eigenvalues and eigenvectors $(\lambda_j,\bv_j)$ of $\Ll_s$ such that we can write $\bu=\sum_{j=1}^{k}{u}_j\bv_j$
and from this we get
\begin{align*}
&\frac{{u}_j - \bar{{u}}_j}{\tau} + \eps \lambda_j {u}_j + c{u}_j = 
-\frac{1}{\eps} \vb_j^T \psi'(\bar{\bu}) + c \bar{{u}}_j + \vb_j^T \mathbf{\Omega} (\mathbf{f} - \bar{\bu}).
\end{align*}
This equation can be solved to obtain the new coefficients $u_j$ from the old coefficients $\bar{u}_j$. After a sufficient number of time steps, $\bu$ will converge against a stable solution.

We apply this method to the same spiral data set as seen in Section~\ref{sec::res::ev}, again with $\sigma=3.5$ but this time only with $n=100\,000$. 
\footnotetext{Image source: TU Chemnitz/Wolfgang Thieme}
The data points have been generated by a multivariate normal distribution around five center points, and the true label of each vertex has been set to the center point that is closest to it.
We computed the eigenvectors to the $k=5$ smallest eigenvalues of the Laplacian; once by the NFFT-based Lanczos method with $n=32$, $m=4$, and $\eps_B=0$, and once with the traditional Nystr\"om method with $L=1\,000$ where only $5$ columns of $\Vv_L$ are used. We then applied the described method with $\tau=0.1$, $\eps=10$, $\omega_0 = 10\,000$, and $c = \frac{2}{\eps} + \omega_0$. The iteration terminated if the squared relative change in $\bu$ was less than 1e-10. We repeat this process for 50 instances of the spiral dataset and sample sizes $s \in \{1,2,3,4,5,7,10\}$.

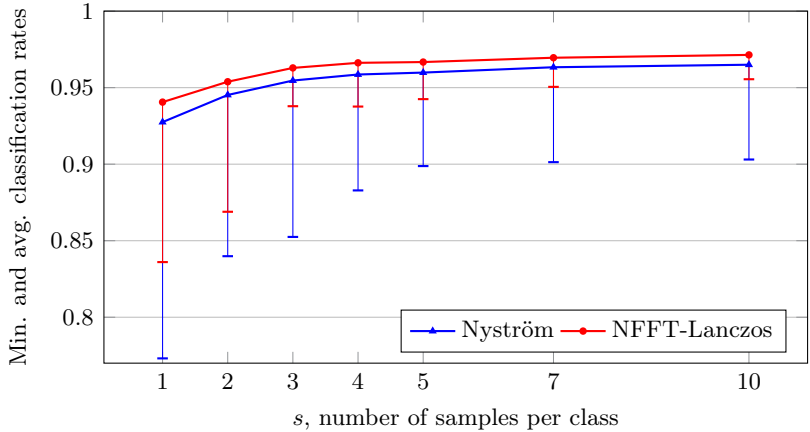
\begin{figure}[htb!]
	\centering{
		\begin{tikzpicture}[baseline]
		\begin{axis}[
			font=\footnotesize,
			enlarge x limits=true,
			height=0.4\textwidth,
			grid=major,
			xmajorgrids=false,
			width=0.7\textwidth,
			xtick={1,2,3,4,5,7,10},
			ytick={1,0.95,0.9,0.85, 0.8},
			xmin=1,xmax=10,
			ymin=0.77,ymax=1,
			xlabel={$s$, number of samples per class},
			ylabel={Min. and avg. classification rates},
			legend style={legend cell align=left}, legend pos=south east,
			legend columns = -1,
		]
		
		\addplot[blue,mark=triangle*,mark size=1,thick,mark options={solid},
		error bars/.cd, y dir=minus, y explicit, error bar style={solid}, error mark options={rotate=90,mark size=2,thick}] coordinates {
			(1,9.2749e-01) -= (0, 1.5437e-01) += (0, 4.8970e-02)
			(2,9.4521e-01) -= (0, 1.0533e-01) += (0, 3.8968e-02)
			(3,9.5463e-01) -= (0, 1.0216e-01) += (0, 2.8400e-02)
			(4,9.5858e-01) -= (0, 7.5722e-02) += (0, 2.4458e-02)
			(5,9.5984e-01) -= (0, 6.1083e-02) += (0, 2.4347e-02)
			(7,9.6333e-01) -= (0, 6.2005e-02) += (0, 2.0835e-02)
			(10,9.6497e-01) -= (0, 6.1918e-02) += (0, 1.8962e-02)
		};
		\addlegendentry{Nystr\"om} %
		
		\addplot[red,mark=*,mark size=1,thick,mark options={solid},
		error bars/.cd, y dir=minus, y explicit, error bar style={solid}, error mark options={rotate=90,mark size=2,thick}] coordinates {
			(1,9.4055e-01) -= (0, 1.0452e-01) += (0, 3.6575e-02)
			(2,9.5386e-01) -= (0, 8.4955e-02) += (0, 2.8775e-02)
			(3,9.6289e-01) -= (0, 2.5016e-02) += (0, 1.9344e-02)
			(4,9.6621e-01) -= (0, 2.8568e-02) += (0, 1.6472e-02)
			(5,9.6672e-01) -= (0, 2.4226e-02) += (0, 1.6684e-02)
			(7,9.6954e-01) -= (0, 1.9032e-02) += (0, 1.3678e-02)
			(10,9.7139e-01) -= (0, 1.5885e-02) += (0, 1.1715e-02)
			
		};
		\addlegendentry{NFFT-Lanczos} %
		\end{axis}
		\end{tikzpicture}
	}
	\caption{Comparison of average classification rates with the phase field method on relabeled spiral data sets. %
	}\label{fig::SSLPF}
\end{figure}

Figure~\ref{fig::SSLPF} depicts the average accuracy results. We conclude that in this example, the increased eigenvector quality achieved by the NFFT-based method yields an average accuracy boost of approximately 0.5 to 1.5 percentage points, as well as the worst result being significantly less bad. On a computer with Intel Core i7 CPU 4770 (3.40 GHz), the runtimes were approximately 8 seconds for the NFFT-based Lanczos method, 27 seconds for the Nystr\"om method, and less than a second for the solution of the Allen--Cahn equation, which almost always converged after only three time steps.

\subsubsection{Semi-supervised learning by a kernel method}
\label{sec::res::app::kernel}
In addition to the phase field method, we employ a second semi-supervised learning technique used in \cite{zhou2004learning,hein2013total} for SSL problems with only two classes. Based on a training vector $\mathbf{f}$ holding 1, -1, or 0 just as in the previous section, a similar $\bu$ is obtained by minimizing the function 
\begin{equation}
\label{eq::ssl}
\argmin_{\bu \in \R^n} \;
\frac{1}{2}\norm{\bu-\mathbf{f}}_2^{2}+\frac{\beta}{2} \bu^{T} \Ll_s\bu,
\end{equation}
where $\beta$ can be understood as a regularization parameter.
For the solution of this minimization problem, we only have to solve the equation
\begin{equation}
\label{eq::ssl::system}
\left(\mathbf{I}+\beta \Ll_s\right)\bu=\mathbf{f},
\end{equation}
where $\mathbf{I}$ is the identity matrix. Similar systems arise naturally in scattered data interpolation~\cite{iske2017hierarchical}.
We run numerical tests using the \texttt{crescentfullmoon.m}\footnote{\url{https://www.mathworks.com/matlabcentral/fileexchange/41459-6-functions-for-generating-artificial-datasets}} data set with $n=100\,000$ data points and parameters \texttt{r1}=5, \texttt{r2}=5, \texttt{r3}=8. As illustrated in Figure~\ref{fig::illustration_datasets::crescentfullmoon}, the set is divided into two classes of points in the full moon and the crescent, distributed in a 1-to-3 ratio.
We generate 5 random instances of the data set, and for each instance we run 10 repetitions with randomly chosen training data, where we consider $s \in\{1,2,5,10,25\}$ known samples per class. For the adjacency matrix $\Ww$, we set the scaling parameter $\sigma=0.1$. The tests are run with regularization parameter $\beta\in\{10^3,3\cdot 10^3,10^4,3\cdot 10^4,10^5\}$. We solve each system~\eqref{eq::ssl::system} using the CG algorithm with tolerance parameter $10^{-4}$ and a maximum number of 1\,000 iterations. For the fast matrix-vector multiplications with the matrix~$\Ll_s$, we use the NFFT-based fast summation in Algorithm~\ref{alg::nfft_fastsum} with parameters $N=512$, $m=3$, $\varepsilon_\mathrm{B}=0$.

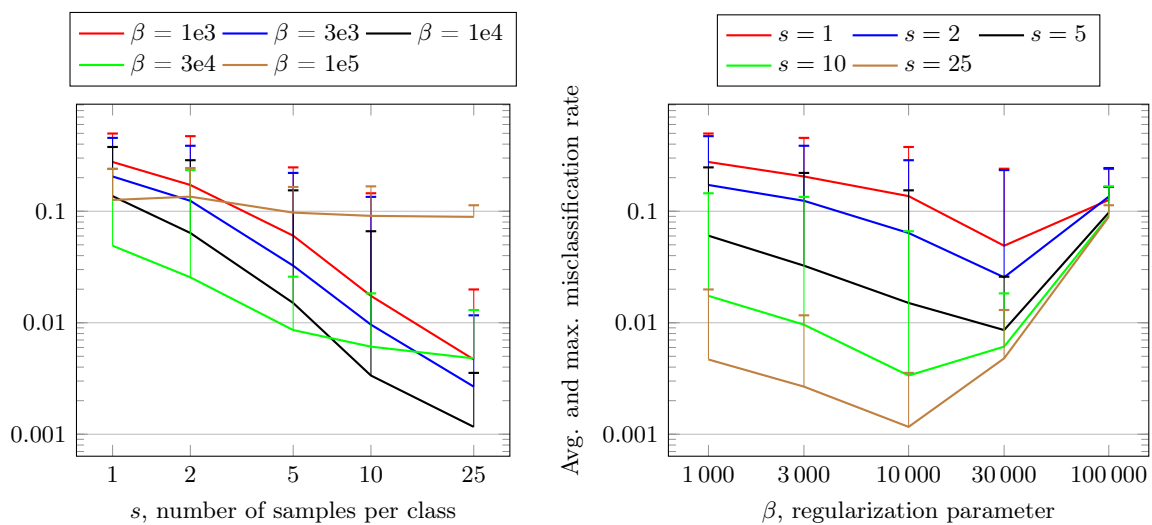
\begin{figure}[htb!]
\centering{
\begin{tikzpicture}[baseline]
  \begin{axis}[font=\footnotesize,enlargelimits=true,xmin=1,xmax=25,height=0.4\textwidth, width=0.47\textwidth, grid=major, xlabel={$s$, number of samples per class}, %
    xtick={1,2,5,10,25},xticklabel={\pgfkeys{/pgf/fpu=true}\pgfmathparse{exp(\tick)}\pgfmathprintnumber[1000 sep={\,},fixed relative, precision=3]{\pgfmathresult}\pgfkeys{/pgf/fpu=false}},
    yticklabel={\pgfkeys{/pgf/fpu=true}\pgfmathparse{exp(\tick)}\pgfmathprintnumber[1000 sep={\,},fixed relative, precision=3]{\pgfmathresult}\pgfkeys{/pgf/fpu=false}},
    legend style={legend cell align=left, align=left, at={(0.5,1.05)}, anchor=south},%
    legend columns=3,
    xmode=log,ymode=log,
    xmajorgrids=false,
  ]
  \addplot[red,thick,error bars/.cd, y dir=plus, y explicit, error bar style={solid}, error mark options={rotate=90,mark size=2,thick}] coordinates {
    (1,2.767e-01) -= (0,1.631e-01) += (0,2.213e-01)  (2,1.721e-01) -= (0,1.608e-01) += (0,2.995e-01)  (5,6.037e-02) -= (0,5.913e-02) += (0,1.871e-01)  (10,1.742e-02) -= (0,1.545e-02) += (0,1.276e-01)  (25,4.682e-03) -= (0,2.972e-03) += (0,1.521e-02)
  };
  \addlegendentry{$\beta$ = 1e3}
  \addplot[blue,thick,error bars/.cd, y dir=plus, y explicit, error bar style={solid}, error mark options={rotate=90,mark size=2,thick}] coordinates {
    (1,2.051e-01) -= (0,1.893e-01) += (0,2.500e-01)  (2,1.241e-01) -= (0,1.209e-01) += (0,2.626e-01)  (5,3.251e-02) -= (0,3.147e-02) += (0,1.882e-01)  (10,9.609e-03) -= (0,8.389e-03) += (0,1.250e-01)  (25,2.673e-03) -= (0,1.483e-03) += (0,8.987e-03)
  };
  \addlegendentry{$\beta$ = 3e3}
  \addplot[black,thick,error bars/.cd, y dir=plus, y explicit, error bar style={solid}, error mark options={rotate=90,mark size=2,thick}] coordinates {
    (1,1.367e-01) -= (0,1.351e-01) += (0,2.405e-01)  (2,6.384e-02) -= (0,6.285e-02) += (0,2.232e-01)  (5,1.505e-02) -= (0,1.436e-02) += (0,1.390e-01)  (10,3.361e-03) -= (0,2.541e-03) += (0,6.283e-02)  (25,1.164e-03) -= (0,5.542e-04) += (0,2.386e-03)
  };
  \addlegendentry{$\beta$ = 1e4}
  \addplot[green,thick,error bars/.cd, y dir=plus, y explicit, error bar style={solid}, error mark options={rotate=90,mark size=2,thick}] coordinates {
    (1,4.906e-02) -= (0,4.869e-02) += (0,1.917e-01)  (2,2.556e-02) -= (0,2.511e-02) += (0,2.084e-01)  (5,8.598e-03) -= (0,8.108e-03) += (0,1.729e-02)  (10,6.104e-03) -= (0,5.614e-03) += (0,1.224e-02)  (25,4.790e-03) -= (0,4.520e-03) += (0,8.220e-03)
  };
  \addlegendentry{$\beta$ = 3e4}
  \addplot[brown,thick,error bars/.cd, y dir=plus, y explicit, error bar style={solid}, error mark options={rotate=90,mark size=2,thick}] coordinates {
    (1,1.263e-01) -= (0,1.156e-01) += (0,1.131e-01)  (2,1.352e-01) -= (0,9.132e-02) += (0,1.090e-01)  (5,9.723e-02) -= (0,6.520e-02) += (0,6.798e-02)  (10,9.088e-02) -= (0,3.228e-02) += (0,7.634e-02)  (25,8.908e-02) -= (0,2.505e-02) += (0,2.412e-02)
  };
  \addlegendentry{$\beta$ = 1e5}
  \end{axis}
\end{tikzpicture}
\hfill
\begin{tikzpicture}[baseline]
  \begin{axis}[font=\footnotesize,enlargelimits=true,xmin=1e3,xmax=1e5,height=0.4\textwidth, width=0.51\textwidth, grid=major, xlabel={$\beta$, regularization parameter}, ylabel={Avg. and max. misclassification rate},
    xtick={1e3,3e3,1e4,3e4,1e5},xticklabel={\pgfkeys{/pgf/fpu=true}\pgfmathparse{exp(\tick)}\pgfmathprintnumber[1000 sep={\,},fixed relative, precision=3]{\pgfmathresult}\pgfkeys{/pgf/fpu=false}},
    yticklabel={\pgfkeys{/pgf/fpu=true}\pgfmathparse{exp(\tick)}\pgfmathprintnumber[1000 sep={\,},fixed relative, precision=3]{\pgfmathresult}\pgfkeys{/pgf/fpu=false}},
    legend style={legend cell align=left, align=left, at={(0.5,1.05)}, anchor=south},%
    legend columns=3,
    xmode=log,ymode=log,
    xmajorgrids=false,
  ]
  \addplot[red,thick,error bars/.cd, y dir=plus, y explicit, error bar style={solid}, error mark options={rotate=90,mark size=2,thick}] coordinates {
  (1.000e+03,2.767e-01) -= (0,1.631e-01) += (0,2.213e-01)  (3.000e+03,2.051e-01) -= (0,1.893e-01) += (0,2.500e-01)  (1.000e+04,1.367e-01) -= (0,1.351e-01) += (0,2.405e-01)  (3.000e+04,4.906e-02) -= (0,4.869e-02) += (0,1.917e-01)  (1.000e+05,1.263e-01) -= (0,1.156e-01) += (0,1.131e-01)
  };
  \addlegendentry{$s=1$}
  \addplot[blue,thick,error bars/.cd, y dir=plus, y explicit, error bar style={solid}, error mark options={rotate=90,mark size=2,thick}] coordinates {
  (1.000e+03,1.721e-01) -= (0,1.608e-01) += (0,2.995e-01)  (3.000e+03,1.241e-01) -= (0,1.209e-01) += (0,2.626e-01)  (1.000e+04,6.384e-02) -= (0,6.285e-02) += (0,2.232e-01)  (3.000e+04,2.556e-02) -= (0,2.511e-02) += (0,2.084e-01)  (1.000e+05,1.352e-01) -= (0,9.132e-02) += (0,1.090e-01)
  };
  \addlegendentry{$s=2$}
  \addplot[black,thick,error bars/.cd, y dir=plus, y explicit, error bar style={solid}, error mark options={rotate=90,mark size=2,thick}] coordinates {
  (1.000e+03,6.037e-02) -= (0,5.913e-02) += (0,1.871e-01)  (3.000e+03,3.251e-02) -= (0,3.147e-02) += (0,1.882e-01)  (1.000e+04,1.505e-02) -= (0,1.436e-02) += (0,1.390e-01)  (3.000e+04,8.598e-03) -= (0,8.108e-03) += (0,1.729e-02)  (1.000e+05,9.723e-02) -= (0,6.520e-02) += (0,6.798e-02)
  };
  \addlegendentry{$s=5$}
  \addplot[green,thick,error bars/.cd, y dir=plus, y explicit, error bar style={solid}, error mark options={rotate=90,mark size=2,thick}] coordinates {
  (1.000e+03,1.742e-02) -= (0,1.545e-02) += (0,1.276e-01)  (3.000e+03,9.609e-03) -= (0,8.389e-03) += (0,1.250e-01)  (1.000e+04,3.361e-03) -= (0,2.541e-03) += (0,6.283e-02)  (3.000e+04,6.104e-03) -= (0,5.614e-03) += (0,1.224e-02)  (1.000e+05,9.088e-02) -= (0,3.228e-02) += (0,7.634e-02)
  };
  \addlegendentry{$s=10$}
  \addplot[brown,thick,error bars/.cd, y dir=plus, y explicit, error bar style={solid}, error mark options={rotate=90,mark size=2,thick}] coordinates {
  (1.000e+03,4.682e-03) -= (0,2.972e-03) += (0,1.521e-02)  (3.000e+03,2.673e-03) -= (0,1.483e-03) += (0,8.987e-03)  (1.000e+04,1.164e-03) -= (0,5.542e-04) += (0,2.386e-03)  (3.000e+04,4.790e-03) -= (0,4.520e-03) += (0,8.220e-03)  (1.000e+05,8.908e-02) -= (0,2.505e-02) += (0,2.412e-02)
  };
  \addlegendentry{$s=25$}
  \end{axis}
\end{tikzpicture}
}
\caption{Misclassification rate solving~\eqref{eq::ssl::system} using the CG algorithm and Algorithm~\ref{alg::nfft_fastsum} for the \texttt{crescentfullmoon.m} data set with $n=100\,000$ data points.
}\label{fig:test_SSL_kernel_crescentfullmoon_fs}
\end{figure}

In Figure~\ref{fig:test_SSL_kernel_crescentfullmoon_fs}, we visualize the average and maximum misclassification rate of the $5\cdot 10$ test runs for each fixed $s$ and $\beta$.
In the left plot, we show the misclassification rate in dependence of the number of samples~$s$ per class for the different regularization parameters~$\beta$. We observe in general that the misclassification rates decrease for increasing~$s$. The lowest rate is achieved for $s=25$ samples per class and $\beta=10^4$, where the average and maximum misclassification rate are 0.0012 and 0.0036, respectively.
In the right plot, we depict the misclassification rate in dependence of the regularization parameter~$\beta$ for fixed number of samples~$s$ per class. For $s\in\{1,2,5\}$, the average misclassification rates decline for increasing $\beta$ until $\beta=3\cdot 10^4$ and grow again for $\beta=10^5$. For $s\in\{10,25\}$, the average misclassification rates decline for increasing $\beta$ until $\beta=10^4$ and grow again afterwards.
We remark that in all test runs, the maximum number of CG iterations was 536 and the maximum runtime for solving \eqref{eq::ssl::system} was approximately 151\,seconds on a computer with Intel Core i7 CPU 970 (3.20~GHz) using one thread.

Additionally, we used the NFFT-based Lanczos method from Section~\ref{sec::lanczos} in order to approximate the matrix $\Aa:=\Dd^{-1/2}\Ww\Dd^{-1/2}$ by a truncated eigenapproximation 
$
\mathbf{V}_k\mathbf{D}_k\mathbf{V}_k^{T}
$
with $\mathbf{V}_k\in\R^{n,k}$ and this allows for computing the matrix-vector products in ~\eqref{eq::ssl::system} in a fast way for fixed small~$k$. Using $k=10$ eigenvalues and eigenvectors, we achieve similar results as those shown in Figure~\ref{fig:test_SSL_kernel_crescentfullmoon_fs}. The computation of the eigenapproximation required up to 6 minutes on a computer with Intel Core i7 CPU 970 (3.20~GHz) using one thread. The maximum runtime for solving \eqref{eq::ssl::system} was approximately 0.15\,seconds.

Alternatively, we applied the Nystr\"om method from Section~\ref{sec::nystrom::traditional} with parameter $L=5\,000$ to obtain a truncated eigenapproximation, where the corresponding computation required more than 3 hours for each eigenapproximation. 
However, the eigenvalues were not computed correctly in our tests. This was due to the matrix block $\Ww_{XX}$ in Equation \eqref{eq:nystromapprox} being ill-conditioned. Consequently the CG method aborted in the first iteration and the output could not be used for classification.

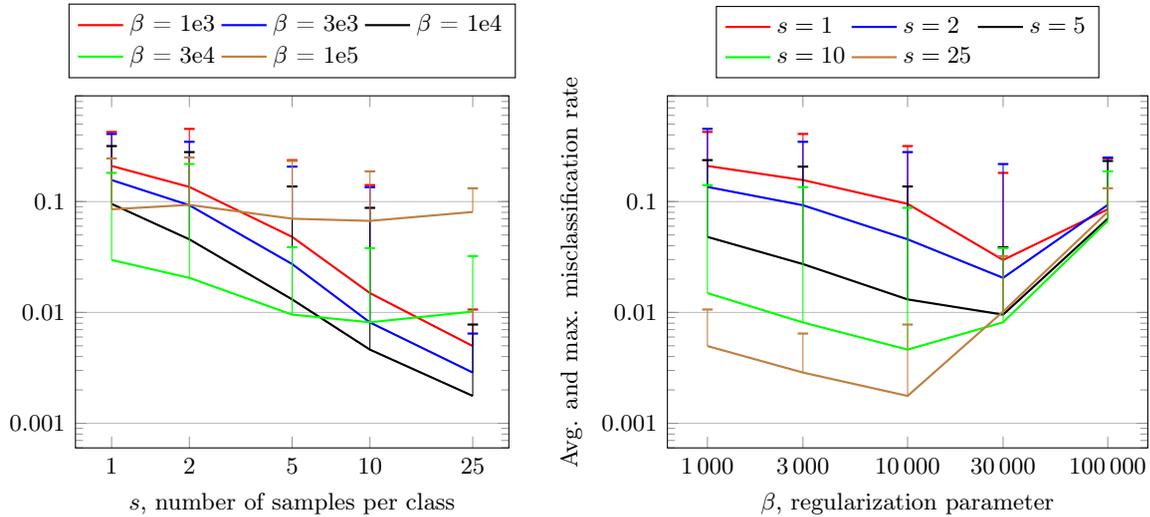
\begin{figure}[htb!]
\centering{
\begin{tikzpicture}[baseline]
  \begin{axis}[font=\footnotesize,enlarge x limits=true,enlarge y limits=false,xmin=1,xmax=25,height=0.4\textwidth, width=0.47\textwidth, grid=major, xlabel={$s$, number of samples per class}, %
    xtick={1,2,5,10,25},xticklabel={\pgfkeys{/pgf/fpu=true}\pgfmathparse{exp(\tick)}\pgfmathprintnumber[1000 sep={\,},fixed relative, precision=3]{\pgfmathresult}\pgfkeys{/pgf/fpu=false}},
    yticklabel={\pgfkeys{/pgf/fpu=true}\pgfmathparse{exp(\tick)}\pgfmathprintnumber[1000 sep={\,},fixed relative, precision=3]{\pgfmathresult}\pgfkeys{/pgf/fpu=false}},
    legend style={legend cell align=left, align=left, at={(0.5,1.05)}, anchor=south},%
    legend columns=3,
    xmode=log,ymode=log,
    xmajorgrids=false,
    ymin=6e-4,ymax=0.9,
  ]
  \addplot[red,thick,error bars/.cd, y dir=plus, y explicit, error bar style={solid}, error mark options={rotate=90,mark size=2,thick}] coordinates {
  (1,2.097e-01) -= (0,1.805e-01) += (0,2.155e-01)  (2,1.356e-01) -= (0,1.258e-01) += (0,3.173e-01)  (5,4.803e-02) -= (0,4.668e-02) += (0,1.887e-01)  (10,1.495e-02) -= (0,1.200e-02) += (0,1.259e-01)  (25,4.975e-03) -= (0,3.145e-03) += (0,5.655e-03)
  };
  \addlegendentry{$\beta$ = 1e3}
  \addplot[blue,thick,error bars/.cd, y dir=plus, y explicit, error bar style={solid}, error mark options={rotate=90,mark size=2,thick}] coordinates {
  (1,1.565e-01) -= (0,1.522e-01) += (0,2.511e-01)  (2,9.273e-02) -= (0,8.963e-02) += (0,2.536e-01)  (5,2.732e-02) -= (0,2.609e-02) += (0,1.794e-01)  (10,8.124e-03) -= (0,6.394e-03) += (0,1.268e-01)  (25,2.869e-03) -= (0,1.559e-03) += (0,3.581e-03)
  };
  \addlegendentry{$\beta$ = 3e3}
  \addplot[black,thick,error bars/.cd, y dir=plus, y explicit, error bar style={solid}, error mark options={rotate=90,mark size=2,thick}] coordinates {
  (1,9.539e-02) -= (0,9.433e-02) += (0,2.216e-01)  (2,4.568e-02) -= (0,4.468e-02) += (0,2.340e-01)  (5,1.311e-02) -= (0,1.212e-02) += (0,1.237e-01)  (10,4.620e-03) -= (0,3.670e-03) += (0,8.319e-02)  (25,1.766e-03) -= (0,1.086e-03) += (0,6.014e-03)
  };
  \addlegendentry{$\beta$ = 1e4}
  \addplot[green,thick,error bars/.cd, y dir=plus, y explicit, error bar style={solid}, error mark options={rotate=90,mark size=2,thick}] coordinates {
  (1,2.974e-02) -= (0,2.876e-02) += (0,1.518e-01)  (2,2.052e-02) -= (0,1.963e-02) += (0,1.977e-01)  (5,9.557e-03) -= (0,9.027e-03) += (0,2.929e-02)  (10,8.152e-03) -= (0,7.722e-03) += (0,2.988e-02)  (25,1.018e-02) -= (0,9.566e-03) += (0,2.206e-02)
  };
  \addlegendentry{$\beta$ = 3e4}
  \addplot[brown,thick,error bars/.cd, y dir=plus, y explicit, error bar style={solid}, error mark options={rotate=90,mark size=2,thick}] coordinates {
  (1,8.526e-02) -= (0,8.137e-02) += (0,1.599e-01)  (2,9.345e-02) -= (0,8.941e-02) += (0,1.562e-01)  (5,7.012e-02) -= (0,6.698e-02) += (0,1.625e-01)  (10,6.701e-02) -= (0,5.454e-02) += (0,1.204e-01)  (25,8.063e-02) -= (0,4.388e-02) += (0,5.132e-02)
  };
  \addlegendentry{$\beta$ = 1e5}
  \end{axis}
\end{tikzpicture}
\hfill
\begin{tikzpicture}[baseline]
  \begin{axis}[font=\footnotesize,enlarge x limits=true,enlarge y limits=false,xmin=1e3,xmax=1e5,height=0.4\textwidth, width=0.51\textwidth, grid=major, xlabel={$\beta$, regularization parameter}, ylabel={Avg. and max. misclassification rate},
    xtick={1e3,3e3,1e4,3e4,1e5},xticklabel={\pgfkeys{/pgf/fpu=true}\pgfmathparse{exp(\tick)}\pgfmathprintnumber[1000 sep={\,},fixed relative, precision=3]{\pgfmathresult}\pgfkeys{/pgf/fpu=false}},
    yticklabel={\pgfkeys{/pgf/fpu=true}\pgfmathparse{exp(\tick)}\pgfmathprintnumber[1000 sep={\,},fixed relative, precision=3]{\pgfmathresult}\pgfkeys{/pgf/fpu=false}},
    legend style={legend cell align=left, align=left, at={(0.5,1.05)}, anchor=south},%
    legend columns=3,
    xmode=log,ymode=log,
    xmajorgrids=false,
    ymin=6e-4,ymax=0.9,
  ]
  \addplot[red,thick,error bars/.cd, y dir=plus, y explicit, error bar style={solid}, error mark options={rotate=90,mark size=2,thick}] coordinates {
  (1.000e+03,2.097e-01) -= (0,1.805e-01) += (0,2.155e-01)  (3.000e+03,1.565e-01) -= (0,1.522e-01) += (0,2.511e-01)  (1.000e+04,9.539e-02) -= (0,9.433e-02) += (0,2.216e-01)  (3.000e+04,2.974e-02) -= (0,2.876e-02) += (0,1.518e-01)  (1.000e+05,8.526e-02) -= (0,8.137e-02) += (0,1.599e-01)
  };
  \addlegendentry{$s=1$}
  \addplot[blue,thick,error bars/.cd, y dir=plus, y explicit, error bar style={solid}, error mark options={rotate=90,mark size=2,thick}] coordinates {
  (1.000e+03,1.356e-01) -= (0,1.258e-01) += (0,3.173e-01)  (3.000e+03,9.273e-02) -= (0,8.963e-02) += (0,2.536e-01)  (1.000e+04,4.568e-02) -= (0,4.468e-02) += (0,2.340e-01)  (3.000e+04,2.052e-02) -= (0,1.963e-02) += (0,1.977e-01)  (1.000e+05,9.345e-02) -= (0,8.941e-02) += (0,1.562e-01)
  };
  \addlegendentry{$s=2$}
  \addplot[black,thick,error bars/.cd, y dir=plus, y explicit, error bar style={solid}, error mark options={rotate=90,mark size=2,thick}] coordinates {
  (1.000e+03,4.803e-02) -= (0,4.668e-02) += (0,1.887e-01)  (3.000e+03,2.732e-02) -= (0,2.609e-02) += (0,1.794e-01)  (1.000e+04,1.311e-02) -= (0,1.212e-02) += (0,1.237e-01)  (3.000e+04,9.557e-03) -= (0,9.027e-03) += (0,2.929e-02)  (1.000e+05,7.012e-02) -= (0,6.698e-02) += (0,1.625e-01)
  };
  \addlegendentry{$s=5$}
  \addplot[green,thick,error bars/.cd, y dir=plus, y explicit, error bar style={solid}, error mark options={rotate=90,mark size=2,thick}] coordinates {
  (1.000e+03,1.495e-02) -= (0,1.200e-02) += (0,1.259e-01)  (3.000e+03,8.124e-03) -= (0,6.394e-03) += (0,1.268e-01)  (1.000e+04,4.620e-03) -= (0,3.670e-03) += (0,8.319e-02)  (3.000e+04,8.152e-03) -= (0,7.722e-03) += (0,2.988e-02)  (1.000e+05,6.701e-02) -= (0,5.454e-02) += (0,1.204e-01)
  };
  \addlegendentry{$s=10$}
  \addplot[brown,thick,error bars/.cd, y dir=plus, y explicit, error bar style={solid}, error mark options={rotate=90,mark size=2,thick}] coordinates {
  (1.000e+03,4.975e-03) -= (0,3.145e-03) += (0,5.655e-03)  (3.000e+03,2.869e-03) -= (0,1.559e-03) += (0,3.581e-03)  (1.000e+04,1.766e-03) -= (0,1.086e-03) += (0,6.014e-03)  (3.000e+04,1.018e-02) -= (0,9.566e-03) += (0,2.206e-02)  (1.000e+05,8.063e-02) -= (0,4.388e-02) += (0,5.132e-02)
  };
  \addlegendentry{$s=25$}
  \end{axis}
\end{tikzpicture}
}
\caption{Misclassification rate solving~\eqref{eq::ssl::system} using the CG algorithm and Algorithm~\ref{alg::nfft_fastsum} for the \texttt{crescentfullmoon.m} data set with $n=100\,000$ data points and Laplacian RBF kernel~\eqref{eq::W_laplacian_rbf}.
}\label{fig:test_SSL_kernel_crescentfullmoon_fs:laplacian_kernel}
\end{figure}

In order to illustrate the flexibility of the NFFT-based fast summation, we also apply Algorithm~\ref{alg::nfft_fastsum} to a non-Gaussian weight function $w$ in~\eqref{eq::W_gaussian}. Here, we consider the ``Laplacian RBF kernel'' $K(\mathbf{y}) := \exp (-\norm{\mathbf{y}}/\sigma)$, such that the weight matrix is constructed as
\begin{equation}\label{eq::W_laplacian_rbf}
W_{ji} = 
w(v_j,v_i)=
\begin{cases}
\exp(-\norm{\vb_j-\vb_i}/\sigma)&\textnormal{ if } 
j \neq i, \\
0 & \textnormal{ otherwise}.
\end{cases}
\end{equation}
In our numerical tests, we set the shape parameter $\sigma=0.05$ and we visualize the test results in Figure~\ref{fig:test_SSL_kernel_crescentfullmoon_fs:laplacian_kernel}.
We observe that the obtained misclassification rates are similar to the ones in Figure~\ref{fig:test_SSL_kernel_crescentfullmoon_fs}, where the Gaussian kernel was used. For some parameter settings, the misclassification rates are slightly better, for other ones slightly worse.

\subsection{Kernel ridge regression}\label{sec::kernel_ridge_regression}
In this section we show that our approach can be applied to the problem of kernel ridge regression, which has a similar flavour to the problem from the previous section. We here illustrate that our method is very flexible since other than just Gaussian kernels can be used for the fast evaluation of matrix-vector products. The starting point is a simple linear regression problem via the minimization of 
\begin{equation}
\label{eq::krr}
\argmin_{\bu \in \R^d} \;
\frac{1}{2} \left\|\mathbf{f} - \mathbf{X}\bu\right\|_2^2
+\frac{\beta}{2} \norm{\bu}_{2}^{2},
\end{equation}
where $\mathbf{X}\in\R^{n\times d}$ is a design matrix %
holding training feature vectors $\mathbf{x}_j \in \R^d$ in its rows, i.e.
$\mathbf{X}^{T}=\left[\mathbf{x}_1,\ldots,\mathbf{x}_n\right]$,
and $\mathbf{f} \in \R^n$ is a given response vector.
The solution $\mathbf{u}$ to this problem can then be used in a linear model to predict a response for any new point $\mathbf{x} \in \R^d$ as $F(\mathbf{x}) = \mathbf{u}^T \mathbf{x}$.

\begin{sloppypar}
The well-known solution formula can be rearranged using the Sherman--Morrison--Woodbury formula to obtain
\begin{align*}
\mathbf{u}&=\left(\mathbf{X}^T\mathbf{X}+\beta\mathbf{I}_d\right)^{-1}\mathbf{X}^T\mathbf{f}\\
&=\left(\beta^{-1}\mathbf{I}_d-\beta^{-2}\mathbf{X}^{T}\left(\mathbf{I}_n+\beta^{-1}\mathbf{X}\mathbf{X}^T\right)^{-1}\mathbf{X}\right)\mathbf{X}^T\mathbf{f}\\
&=\mathbf{X}^T\left(\beta^{-1}\mathbf{I}_n-\beta^{-1}\left(\beta\mathbf{I}_n+\mathbf{X}\mathbf{X}^T\right)^{-1}\mathbf{X}\mathbf{X}^{T}\right)\mathbf{f}\\
&=\mathbf{X}^T\left(\beta\mathbf{I}_n+\mathbf{X}\mathbf{X}^T\right)^{-1}\left(\beta^{-1}\left(\beta\mathbf{I}_n+\mathbf{X}\mathbf{X}^T\right)-\beta^{-1}\mathbf{X}\mathbf{X}^{T}\right)\mathbf{f}\\
&=\mathbf{X}^T\left(\mathbf{X}\mathbf{X}^T+\beta\mathbf{I}_n\right)^{-1}\mathbf{f}.
\end{align*}
Using this formula, we can introduce the dual variable $\bm{\alpha} = \left(\mathbf{X}\mathbf{X}^T + \beta\mathbf{I}\right)^{-1} \mathbf{f}$
and rewrite the predicted response of a new point $\mathbf{x}$ as 
\[ F(\mathbf{x}) = \bu^T \mathbf{x} = \left(\mathbf{X}^T \bm{\alpha} \right)^T \mathbf{x} = \sum_{i=1}^n \bm{\alpha}_i \mathbf{x}_i^T \mathbf{x}. \]
An idea for increasing the flexibility of this method is replacing expressions $\mathbf{x}_i^T \mathbf{x}_j$ with $K(\mathbf{x}_i, \mathbf{x}_j)$ where $K : \R^d \times \R^d \to \R$ is an arbitrary kernel function \cite{robert2014machine}. This leads to replacing $\mathbf{X} \mathbf{X}^T$ with the Gram matrix $\mathbf{K}$ with entries
\begin{equation*}
\mathbf{K}_{ij}=K(\mathbf{x}_i,\mathbf{x}_j)\quad\forall\ i,j = 1,\ldots,n.
\end{equation*}
Consequently, the dual variable becomes $\bm{\alpha}=\left(\mathbf{K}+\beta\mathbf{I}_n\right)^{-1}\mathbf{f}$
and we obtain the kernel-based prediction function
\[ F(\mathbf{x}) = \sum_{i=1}^n \bm{\alpha}_i K(\mathbf{x}_i, \mathbf{x}). \]
For more details we refer to \cite{robert2014machine}. It is easily seen that the main effort of this algorithm goes into the computation of the coefficient vector $\bm{\alpha}=\left(\mathbf{K}+\beta\mathbf{I}_n\right)^{-1}\mathbf{f}.$ Note that this is were we again use the NFFT-based matrix vector products in combination with the preconditioned CG method as  the matrix $\mathbf{K}+\beta\mathbf{I}_n$ is positive definite and amenable to being treated using the NFFT for a variety of different kernel functions. In Figure~\ref{fig::KRR} we illustrate the results when kernel ridge regression is used with two different kernels, namely the Gaussian and the inverse multiquadric kernel.
\end{sloppypar}

\begin{figure}[htb!]
\centering{
\subfloat[Inverse multiquadric]{\label{fig::KKRI}
\includegraphics[trim=19cm 2.5cm 17cm 2cm,clip,width=7.5cm]{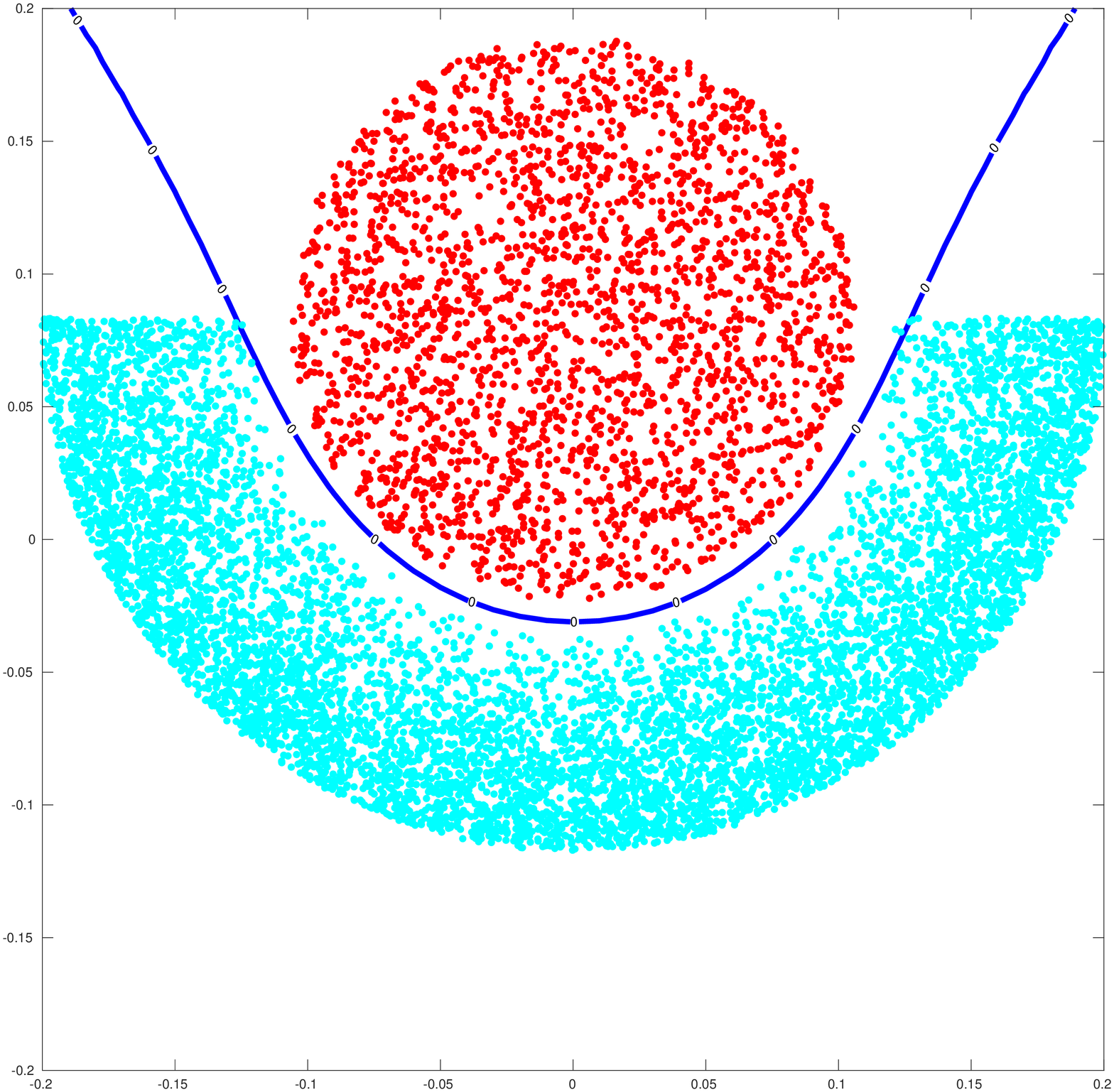}
}
\hfill
\subfloat[Gaussian]{\label{fig::KKRG}
\includegraphics[trim=18cm 2.5cm 18cm 2cm,clip,width=7.5cm]{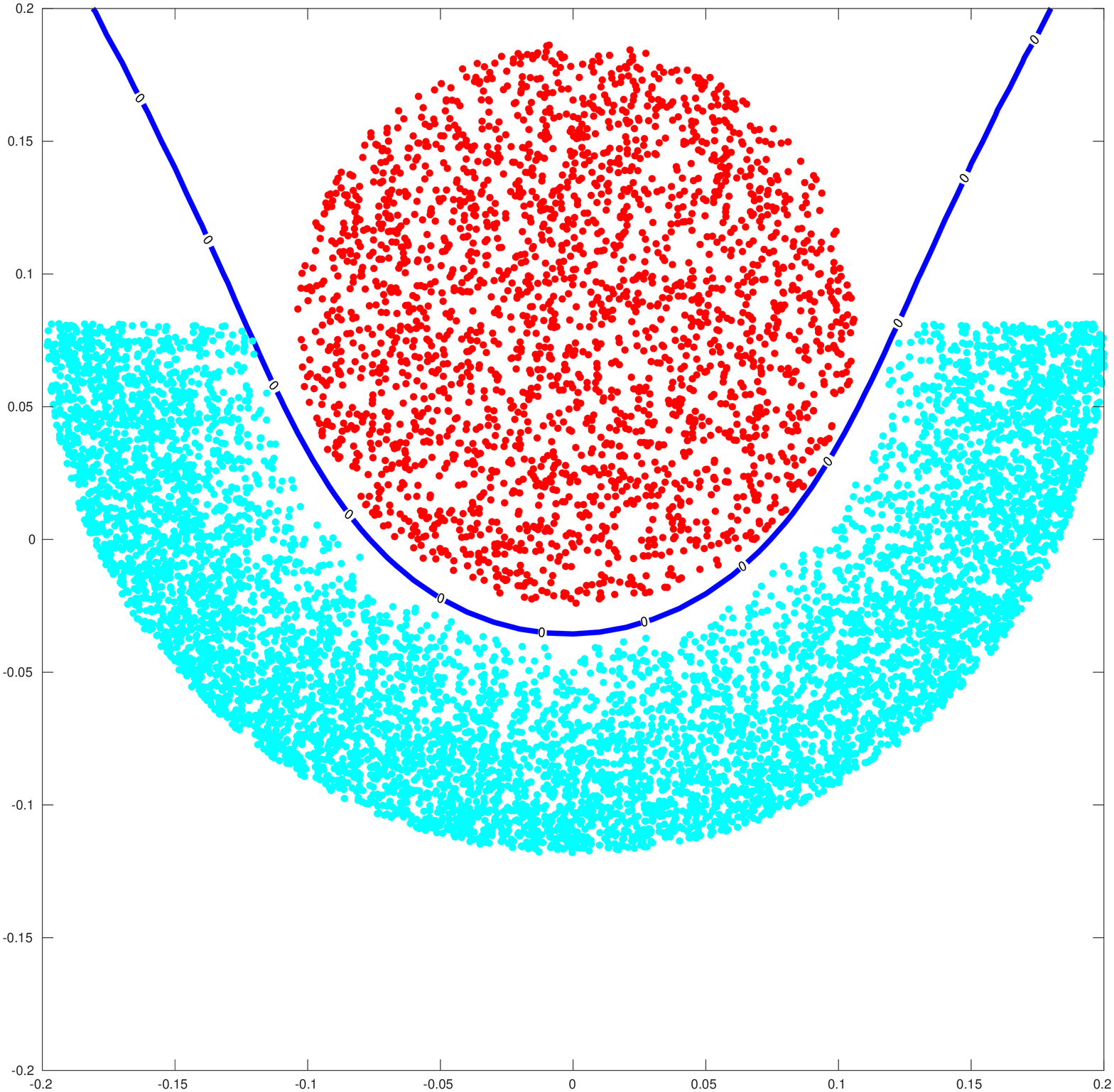}
}
}
\caption{Results of kernel ridge regression applied using an inverse multiquadric kernel (left) and a Gaussian kernel (right). The blue line indicates the decision boundary for the classification of new points.\label{fig::KRR}}%
\end{figure}

\section{Conclusion}
\label{sec::conclusion}

In this work, we have successfully applied the computational power of NFFT-based fast summation to core tools of data science. This was possible due to the nature of the fully connected graph Laplacian and the fact that many algorithms -- most notably the Lanczos method for eigenvalue computation -- only require matrix-vector products with the Laplacian matrix. By using Fourier coefficients to approximate the Gaussian kernel, we use Algorithm~\ref{alg::nfft_fastsum} to compute strong approximations of the matrix-vector product in $\mathcal{O}(n)$ complexity without storing or setting up the full matrix, as opposed to the full matrix's $\mathcal{O}(n^2)$ storage, setup, and application complexity.

For eigenvalue and eigenvector computations, we have discussed the current alternative method of choice in the Nystr\"om extension and developed a hybrid method that allows the basic Nystr\"om idea to benefit from NFFT-based fast matrix-vector products. In our numerical experiments, we found that the Nystr\"om-Gaussian-NFFT method achieved much better eigenvalue accuracy than the traditional Nystr\"om extension even for a significantly smaller parameter $L$, but was in turn outperformed by the NFFT-based Lanczos method.

In strongly eigenvector-dependent applications like in Section~\ref{sec::res::app::pf}, the higher accuracy of the NFFT-based Lanczos method directly leads to better classification results.
In some other applications, however, it is hard to predict if better eigenvector accuracy distinctly improves the results. For instance in Section~\ref{sec::res::app::clustering}, the traditional Nystr\"om extension still achieved good image clusterings on average with small parameter $L$ despite its rather inaccurate eigenvectors. Here, the NFFT-based Lanczos method still has very good selling points in its greatly improved runtime as well as its consistency, while the traditional Nystr\"om tends to ``fail'' in some test runs.

\end{document}